\newtheorem{definition}{Definition}
\newtheorem{theorem}{Theorem}
\newtheorem{proposition}{Proposition}
\newcommand{\Init}{\mathcal{X}_0}
\newcommand{\Unsafe}{\mathcal{X}_u}
\newcommand{\eps}{\epsilon}
\newcommand{\Ind}{\mathds{1}}
\newcommand{\loss}{\mathcal{L}}
\newcommand{\support}{\mathsf{support}}
\newcommand{\Target}{\mathcal{X}_t}
\newcommand{\ReachSafe}{\textrm{ReachAvoid}}
\newcommand{\Discretization}{\tilde{\mathcal{X}}}
\title{Learning Control Policies for Stochastic Systems with Reach-avoid Guarantees}
\author{
\DJ{}or\dj{}e \v{Z}ikeli\'c$^{\star 1}$, 
Mathias Lechner$^{\star 2}$,
Thomas A. Henzinger$^{1}$,
Krishnendu Chatterjee$^{1}$
}
\begin{document}

\maketitle

\begin{abstract}

We study the problem of learning controllers for discrete-time non-linear stochastic dynamical systems with formal reach-avoid guarantees. This work presents the first method for providing formal reach-avoid guarantees, which combine and generalize stability and safety guarantees, with a tolerable probability threshold $p\in[0,1]$ over the infinite time horizon in general Lipschitz continuous systems.
Our method leverages advances in machine learning literature and it represents formal certificates as neural networks.
In particular, we learn a certificate in the form of a reach-avoid supermartingale (RASM), a novel notion that we introduce in this work. Our RASMs provide reachability and avoidance guarantees by imposing constraints on what can be viewed as a stochastic extension of level sets of Lyapunov functions for deterministic systems. Our approach solves several important problems -- it can be used to learn a control policy from scratch, to verify a reach-avoid specification for a fixed control policy, or to fine-tune a pre-trained policy if it does not satisfy the reach-avoid specification. We validate our approach on $3$ stochastic non-linear reinforcement learning tasks.

\end{abstract}

\section{Introduction}\label{sec:intro}

Reinforcement learning (RL) has achieved impressive results in solving non-linear control problems, resulting in an interest to deploy RL algorithms in safety-critical applications. However, most RL algorithms focus solely on optimizing expected performance and do not take safety constraints into account~\citep{sutton2018reinforcement}. This raises concerns about their applicability to safety-critical domains in which unsafe behavior can lead to catastrophic consequences~\citep{AmodeiOSCSM16,GarciaF15}. Complicating matters, models are usually imperfect approximations of real systems that are obtained from observed data, thus models often need to account for uncertainty which is modelled via stochastic disturbances. Formal safety verification of policies learned via RL algorithms and design of learning algorithms that take safety constraints into account have thus become very active research topics.

{\em Reach-avoid constraints} are one of the most common and practically relevant constraints appearing in safety-critical applications that generalize both reachability and safety constraints~\citep{SummersL10}. Given a target region and an unsafe region, the reach-avoid constraint requires that a system controlled by a policy converges to the target region while avoiding the unsafe region. For instance, a lane-keeping constraint requires a self-driving car to reach its destination without leaving the allowed car lanes~\citep{VahidiE03}. In the case of stochastic control problems, reach-avoid constraints are also specified by a minimal probability with which the system controlled by a policy needs to satisfy the reach-avoid constraint.

In this work, we consider discrete-time stochastic control problems under reach-avoid constraints. Following the recent trend that aims to leverage advances in deep RL to safe control, we propose a learning method that learns a control policy together with a formal reach-avoid certificate in the form of a {\em reach-avoid supermartingale (RASM)}, a novel notion that we introduce in this work. 
Informally, an RASM is a function assigning nonnegative real values to each state that is required to strictly decrease in expected value until the target region is reached, but needs to strictly increase for the system to reach the unsafe region. By carefully choosing the ratio of the initial level set of the RASM and the least level set that the RASM needs to attain for the system to reach the unsafe region (here we use the standard level set terminology of Lyapunov functions~\citep{haddad2011nonlinear}), we obtain a formal reach-avoid certificate. The name of RASMs is chosen to emphasize the connection to supermartingale processes in probability theory~\citep{Williams91}. Our RASMs significantly generalize and unify the stochastic control barrier functions which are a standard certificate for safe control of stochastic systems~\citep{PrajnaJP07} and ranking supermartingales that were introduced to certify probability $1$ reachability and stability in~\citep{lechner2021stability}.

{\bf Contributions.} This work presents the first control method that provides formal {\em reach-avoid guarantees} for control of stochastic systems with a {\em specified probability threshold} over the {\em infinite time horizon} in {\em Lipschitz continuous} systems. In contrast, the existing approaches to control under reach-avoid constraints are only applicable to finite horizon settings, polynomial stochastic systems or to deterministic systems (see the following section for an overview of related work). 
Moreover, our method simultaneously {\em learns} the control policy and the RASM certificate in the form of neural networks and is applicable to general non-linear systems. This contrasts the existing methods from the literature that are based on stochastic control barrier functions, which utilize convex optimization tools to compute control policies and are restricted to polynomial system dynamics and policies~\citep{PrajnaJP07,SteinhardtT12,SantoyoDC21,XLZF21}. 
Our algorithm draws insight from established methods for learning Lyapunov functions for stability in deterministic control problems~\citep{RichardsB018,ChangRG19,AbateAGP21}, which were demonstrated to be more efficient than the existing convex optimization methods and were adapted in~\citep{lechner2021stability} for probability~$1$ reachability and stability verification. 
Finally, our method learns a suitable policy on demand, or alternatively, {\em verifies} reach-avoid properties of a fixed Lipschitz continuous control policy. We experimentally validate our method on $3$ stochastic RL tasks and show that it efficiently learns control policies with probabilistic reach-avoid guarantees in practice.

\section{Related Work}\label{sec:relatedwork}

\noindent{\bf Deterministic control problems} There is extensive literature on safe control, with most works certifying stability via Lyapunov functions~\citep{haddad2011nonlinear} or safety via control barrier functions~\citep{AmesCENST19}. Most early works rely either on hand-designed certificates, or automate their computation through convex optimization methods such as sum-of-squares (SOS) programming~\citep{henrion2005positive,parrilo2000structured,jarvis2003some}. Automation via SOS programming is restricted to problems with polynomial system dynamics and does not scale well with dimension. A promising approach to overcome these limitations is to learn a control policy together with a safety certificate in the form of neural networks, for instance see~\citep{RichardsB018,SunJF20,Jin20,ChangG21,QinZCCF21}. In particular, \citep{ChangRG19,AbateAGP21} learn a control policy and a certificate as neural networks by using a learner-verifier framework which repeatedly learns a candidate policy and a certificate 
and then tries to either verify or refine them. Our method extends some of these ideas to stochastic systems.

\smallskip\noindent{\bf Stochastic control problems} Safe control of stochastic systems has received comparatively less attention. Most existing approaches are abstraction based -- they consider finite-time horizon systems and approximate them via a finite-state Markov decision process (MDP). The constrained control problem is then solved for the MDP. 
Due to accumulation of the approximation error in each time step, the size of the MDP state space needs to grow with the length of the considered time horizon, making these methods applicable to systems that evolve over fixed finite time horizons. Notable examples include~\citep{SoudjaniGA15,LavaeiKSZ20,cauchi2019stochy,VinodGO19,Vaidya15,CrespoS03}.
Another line of work considers polynomial systems and utilizes stochastic control barrier functions and convex optimization tools to compute polynomial control policies~\citep{PrajnaJP07,SteinhardtT12,SantoyoDC21,XLZF21}.

\smallskip\noindent{\bf Constrained MDPs} Safe RL has also been studied in the context of constrained MDPs (CMDPs)~\citep{altman1999constrained,Geibel06}. An agent in a CMDP must satisfy hard constraints on expected cost for one or more auxiliary notions of cost aggregated over an episode. Several works study RL algorithms for CMDPs~\citep{uchibe2007constrained}, notably the Constrained Policy Optimization (CPO)~\citep{achiam2017constrained} or the method~\cite{ChowNDG18} which proposed a Lyapunov method for solving CMDPs.
While these algorithms perform well, their constraints are satisfied in expectation which makes them less suitable for safety-critical systems.

\smallskip\noindent{\bf Safe RL via shielding} Some approaches ensure safety by computing two control policies -- the main policy that optimizes the expected reward, and the backup policy that the system falls back to whenever a safety constraint may be violated~\citep{MichalskaM93,PerkinsB02,AlshiekhBEKNT18,Elsayed-AlyBAET21,GiacobbeHKW21}. The backup policy can thus be of simpler form. 
Shielding for stochastic linear systems with additive disturbances has been considered in~\citep{WabersichZ18}. \citep{LiB20,BastaniL21} are applicable to stochastic non-linear systems, however their safety guarantees are {\em statistical} -- their algorithms are randomized with parameters $\delta,\eps\in(0,1)$ and they with probability $1-\delta$ compute an action that is safe in the current state with probability at least $1-\eps$.
The statistical error is accumulated at each state, hence these approaches are not suitable for infinite or long time horizons. In contrast, our approach targets {\em formal} guarantees for {\em infinite} time horizon problems.


\smallskip\noindent{\bf Safe exploration} Model-free RL algorithms need to explore the state space in order to learn high performing actions. 
Safe exploration RL restricts exploration in a way which ensures that given safety constraints are satisfied.
The most common approach to ensuring safe exploration is learning the system dynamics' uncertainty bounds and limiting the exploratory actions within a high probability safety region, with the existing methods based on Gaussian Processes~\citep{Koller2018LearningBasedMP, Turchetta2019SafeEF, Berkenkamp2019SafeEI}, linearized models~\cite{Dalal2018SafeEI}, deep robust regression~\citep{Liu2020RobustRF}, safe padding~\cite{HasanbeigAK20} and Bayesian neural networks~\citep{lechner2021infinite}.
Recent work has also considered learning stable stochastic dynamics from data~\citep{UmlauftH17,LawrenceLFBG20}.

\smallskip\noindent{\bf Probabilistic program analysis} Supermartingales have also been used for the analysis of probabilistic programs (PPs).
In particular, RSMs were originally used to prove almost-sure termination in PPs~\citep{ChakarovS13} and~\citep{AbateGR20} learns RSMs in PPs. Supermartingales were also used for probabilistic termination and safety analysis in PPs~\citep{ChatterjeeNZ17,ChatterjeeGMZ22}.

\section{Preliminaries}\label{sec:prelims}

We consider discrete-time stochastic dynamical systems defined by the equation
\[ \mathbf{x}_{t+1} = f(\mathbf{x}_t, \mathbf{u}_t, \omega_t),\,\mathbf{x}_0\in\Init. \]
The function $f:\mathcal{X}\times\mathcal{U}\times\mathcal{N}\rightarrow \mathcal{X}$ defines system dynamics, where $\mathcal{X}\subseteq\mathbb{R}^m$ is the system state space, $\mathcal{U}\subseteq\mathbb{R}^n$ is the control action space and $\mathcal{N}\subseteq\mathbb{R}^p$ is the stochastic disturbance space. We use $t\in\mathbb{N}_0$ to denote the time index, $\mathbf{x}_t\in\mathcal{X}$ the state of the system, $\mathbf{u}_t\in\mathcal{U}$ the action and $\omega_t\in\mathcal{N}$ the stochastic disturbance vector at time $t$. The set $\Init\subseteq \mathcal{X}$ is the set of initial states. The action $\mathbf{u}_t$ is chosen according to a control policy $\pi:\mathcal{X}\rightarrow\mathcal{U}$, i.e.~$\mathbf{u}_t = \pi(\mathbf{x}_t)$. The stochastic disturbance vector $\omega_t$ is sampled according to a specified probability distribution $d$ over $\mathbb{R}^p$. The dynamics function $f$, control policy $\pi$ and probability distribution $d$ together define a stochastic feedback loop system.

A sequence $(\mathbf{x}_t,\mathbf{u}_t,\omega_t)_{t\in\mathbb{N}_0}$  of state-action-disturbance triples is a trajectory of the system, if for each $t\in\mathbb{N}_0$ we have $\mathbf{u}_t=\pi(\mathbf{x}_t)$, $\omega_t\in\support(d)$ and $\mathbf{x}_{t+1}=f(\mathbf{x}_t,\mathbf{u}_t,\omega_t)$. For each initial state $\mathbf{x}_0\in\mathcal{X}$, the system induces a Markov process which gives rise to the probability space over the set of all trajectories that start in $\mathbf{x}_0$~\citep{Puterman94}. We denote the probability measure and the expectation in this probability space by $\mathbb{P}_{\mathbf{x}_0}$ and $\mathbb{E}_{\mathbf{x}_0}$.

\smallskip\noindent{\bf Assumptions} We assume that $\mathcal{X}\subseteq\mathbb{R}^m$, $\Init\subseteq\mathbb{R}^m$, $\mathcal{U}\subseteq\mathbb{R}^n$ and $\mathcal{N}\subseteq\mathbb{R}^p$ are all Borel-measurable, which is a technical assumption necessary for the system semantics to be mathematically well-defined. 
We also assume that $\mathcal{X}\subseteq\mathbb{R}^m$ is compact 
and that the dynamics function $f$ is Lipschitz continuous, which are common assumptions in control theory.

\smallskip\noindent{\bf Probabilistic reach-avoid problem} 
Let $\mathcal{X}_t\subseteq\mathcal{X}$ and $\mathcal{X}_u\subseteq\mathcal{X}$ be disjoint Borel-measurable subsets of $\mathbb{R}^m$, which we refer to as the {\em target set} and the {\em unsafe set}, respectively. Let $p\in [0,1]$ be a probability threshold. Our goal is to learn a control policy which guarantees that, with probability at least $p$, the system reaches the target set $\Target$ without reaching the unsafe set $\Unsafe$. Formally, we want to learn a control policy $\pi$ such that, for any initial state $\mathbf{x}_0\in\Init$, we have
\[ \mathbb{P}_{\mathbf{x}_0}\Big[ \ReachSafe(\Target,\Unsafe)\Big] \geq p \]
with $\ReachSafe(\Target,\Unsafe) = \{ (\mathbf{x}_t,\mathbf{u}_t,\omega_t)_{t\in\mathbb{N}_0} \mid \exists t\in\mathbb{N}_0.\, \mathbf{x}_t\in \Target \land (\forall t'\leq t.\, \mathbf{x}_{t'}\not\in \Unsafe)\}$ the set of trajectories that reach $\Target$ without reaching $\Unsafe$.

We restrict to the cases when either $p<1$, or $p=1$ and $\Unsafe=\emptyset$. Our approach is not applicable to the case $p=1$ and $\Unsafe\neq\emptyset$ due to technical issues that arise in defining our formal certificate, which we discuss in the following section.
We remark that probabilistic reachability is a special instance of our problem 
obtained by setting $\Unsafe=\emptyset$. On the other hand, we cannot directly obtain the probabilistic safety problem by assuming any specific form of the target set $\Target$, however we will show in the following section that our method implies probabilistic safety with respect to $\Unsafe$ if we provide it with $\Target=\emptyset$.

\section{Theoretical Results}\label{sec:theoretical}

We now present our framework for formally certifying a reach-avoid constraint with a given probability threshold. Our framework is based on the novel notion of {\em reach-avoid supermartingales (RASMs)} that we introduce in this work. Note that, in this section only, we assume that the policy is fixed. In the next section, we will present our algorithm for learning policies that provide formal reach-avoid guarantees in which RASMs will be an integral ingredient.
In what follows, we consider a discrete-time stochastic dynamical system defined as in the previous section. For now, we assume that the probability threshold is strictly smaller than $1$, i.e.~$p<1$. We will later show that our approach straightforwardly extends to the case $p=1$ and $\Unsafe=\emptyset$.

\smallskip\noindent{\bf Reach-avoid supermartingales} We define a {\em reach-avoid supermartingale (RASM)} to be a continuous function $V:\mathcal{X}\rightarrow \mathbb{R}$ that assigns real values to system states. The name is chosen to emphasize the connection to supermartingale processes from probability theory~\citep{Williams91}, which we will explore later in order to prove the effectiveness of RASMs for verifying reach-avoid properties. The value of $V$ is required to be nonnegative over the state space $\mathcal{X}$ (Nonnegativity condition), to be bounded from above by $1$ over the set of initial states $\Init$ (Initial condition) and to be bounded from below by $\frac{1}{1-p}$ over the set of unsafe states $\Unsafe$ (Safety condition). Hence, in order for a system trajectory to reach an unsafe state and violate the safety specification, the value of the RASM $V$ needs to increase at least $\frac{1}{1-p}$ times along the trajectory. Finally, we require the existence of $\eps>0$ such that the value of $V$ decreases in expected value by at least $\eps$ after every one-step evolution of the system from every system state $\mathbf{x}\in\mathcal{X}\backslash\Target$ for which $V(\mathbf{x}) \leq \frac{1}{1-p}$ (Expected decrease condition). Intuitively, this last condition imposes that the system has a tendency to strictly {\em decrease} the value of $V$ until either the target set $\Target$ is reached or a state with $V(\mathbf{x})\geq \frac{1}{1-p}$ is reached. However, as the value of $V$ needs to {\em increase} at least $\frac{1}{1-p}$ times in order for the system to reach an unsafe state, these four conditions will allow us to use RASMs to certify that the reach-avoid constraint is satisfied with probability at least $p$.

\begin{definition}[Reach-avoid supermartingales]\label{def:rssm}
Let $\mathcal{X}_t\subseteq\mathcal{X}$ and $\mathcal{X}_u\subseteq\mathcal{X}$ be the target set and the unsafe set, and let $p\in[0,1)$ be the probability threshold. A continuous function $V:\mathcal{X}\rightarrow \mathbb{R}$ is said to be a {\em reach-avoid supermartingale (RASM)} with respect to $\Target$, $\Unsafe$ and $p$ if it satisfies:
\begin{compactenum}
    \item {\em Nonnegativity condition.} $V(\mathbf{x}) \geq 0$ for each $\mathbf{x}\in\mathcal{X}$.
    \item {\em Initial condition.} $V(\mathbf{x}) \leq 1$ for each $\mathbf{x}\in\Init$.
    \item {\em Safety condition.} $V(\mathbf{x}) \geq \frac{1}{1-p}$ for each $\mathbf{x}\in\Unsafe$.
    \item {\em Expected decrease condition.} There exists $\eps>0$ such that, for each $\mathbf{x}\in\mathcal{X}\backslash\Target$ at which $V(\mathbf{x}) \leq \frac{1}{1-p}$, we have $V(\mathbf{x}) \geq \mathbb{E}_{\omega\sim d}[V(f(\mathbf{x},\pi(\mathbf{x}),\omega))] + \eps$.
\end{compactenum}
\end{definition}

\noindent{\bf Comparison to Lyapunov functions} The defining properties of RASMs hint a connection to Lyapunov functions for deterministic control systems. However, the key difference between Lyapunov functions and our RASMs is that Lyapunov functions deterministically decrease in value whereas RASMs decrease in expectation. Deterministic decrease ensures that each level set of a Lyapunov function, i.e.~a set of states at which the value of Lyapunov functions is at most $l$ for some $l\geq 0$, is an invariant of the system. However, it is in general not possible to impose such a condition on stochastic systems. In contrast, our RASMs only require expected decrease in the level, and the Initial and the Unsafe conditions can be viewed as conditions on the {\em maximal initial level set} and the {\em minimal unsafe level set}. The choice of a ratio of these two level values allows us to use existing results from martingale theory in order to obtain probabilistic avoidance guarantees, while the Expected decrease condition by $\eps>0$ furthermore provides us with probabilistic reachability guarantees.

\smallskip\noindent{\bf Certifying reach-avoid constraints via RASMs} We now show that the existence of an $\eps$-RASM for some $\eps>0$ implies that the reach-avoid constraint is satisfied with probability at least $p$. 

\begin{theorem}\label{thm:rssmreachsafe}
Let $\mathcal{X}_t\subseteq\mathcal{X}$ and $\mathcal{X}_u\subseteq\mathcal{X}$ be the target set and the unsafe set, respectively, and let $p\in[0,1)$ be the probability threshold. Suppose that there exists an RASM $V$  with respect to $\Target$, $\Unsafe$ and $p$. Then, for every $\mathbf{x}_0\in\Init$, $\mathbb{P}_{\mathbf{x}_0}[ \ReachSafe(\Target,\Unsafe)] \geq p$.
\end{theorem}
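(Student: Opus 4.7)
The plan is to view the values $V_t := V(\mathbf{x}_t)$ along a trajectory as a stochastic process that behaves like a nonnegative supermartingale until either the target is reached or the level $\tfrac{1}{1-p}$ is crossed, and then to combine a Ville-type maximal inequality with a finiteness argument to bound the two possible failure modes.

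First, I would introduce the stopping time
\[ \tau \;=\; \inf\{\, t\in\mathbb{N}_0 \mid \mathbf{x}_t\in\Target \;\text{or}\; V(\mathbf{x}_t) \geq \tfrac{1}{1-p}\,\}, \]
and the stopped process $Y_t := V(\mathbf{x}_{t\wedge\tau})$, filtered by the natural filtration generated by $(\mathbf{x}_s)_{s\leq t}$. By the Nonnegativity condition, $Y_t \geq 0$. On the event $\{\tau > t\}$ we have $\mathbf{x}_t\in\mathcal{X}\setminus\Target$ and $V(\mathbf{x}_t)\leq \tfrac{1}{1-p}$, so the Expected decrease condition yields $\mathbb{E}[Y_{t+1}\mid \mathcal{F}_t] \leq Y_t - \eps$; on $\{\tau \leq t\}$ the process is constant. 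Hence $(Y_t)$ is a nonnegative supermartingale.

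Next, the key observation that reduces ReachAvoid to two bounds. If $\tau<\infty$ and $\mathbf{x}_\tau\in\Target$, then for every $t'<\tau$ we have $V(\mathbf{x}_{t'})<\tfrac{1}{1-p}$, which by the Safety condition forces $\mathbf{x}_{t'}\notin\Unsafe$; thus the trajectory witnesses $\ReachAvoid(\Target,\Unsafe)$. Consequently
\[ \mathbb{P}_{\mathbf{x}_0}[\ReachAvoid(\Target,\Unsafe)] \;\geq\; \mathbb{P}_{\mathbf{x}_0}[\tau<\infty]\;-\;\mathbb{P}_{\mathbf{x}_0}\big[V(\mathbf{x}_\tau)\geq\tfrac{1}{1-p}\big]. \]
I would then establish the two needed facts separately. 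For $\mathbb{P}[\tau<\infty]=1$, iterate the expected decrease inequality to get $\mathbb{E}[Y_t] \leq V(\mathbf{x}_0) - \eps\,\mathbb{E}[t\wedge\tau]$, and because $Y_t\geq 0$ this gives $\mathbb{E}[\tau]\leq V(\mathbf{x}_0)/\eps \leq 1/\eps<\infty$ by monotone convergence, so $\tau$ is a.s.\ finite. For the level-crossing probability, I would apply Ville's maximal inequality (or equivalently the optional stopping theorem applied at any deterministic horizon followed by a limit) to the nonnegative supermartingale $(Y_t)$ with threshold $\lambda = \tfrac{1}{1-p}$, obtaining
\[ \mathbb{P}_{\mathbf{x}_0}\big[\sup_{t} Y_t \geq \tfrac{1}{1-p}\big] \;\leq\; (1-p)\,\mathbb{E}[Y_0] \;\leq\; (1-p)\cdot 1, \]
where the last step uses the Initial condition. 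Since $V(\mathbf{x}_\tau)\geq\tfrac{1}{1-p}$ implies $\sup_t Y_t \geq \tfrac{1}{1-p}$, combining these two bounds yields $\mathbb{P}_{\mathbf{x}_0}[\ReachAvoid]\geq 1 - (1-p) = p$.

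The main technical nuance, rather than a deep obstacle, is verifying that $(Y_t)$ is honestly a supermartingale in the measure-theoretic sense: this uses Borel-measurability of $\Target$, $\Unsafe$, $V$ (continuous) and the transition kernel induced by $f$, $\pi$ and $d$, together with Fubini/tower property to write $\mathbb{E}[V(\mathbf{x}_{t+1})\mid\mathcal{F}_t] = \mathbb{E}_{\omega\sim d}[V(f(\mathbf{x}_t,\pi(\mathbf{x}_t),\omega))]$; these are exactly the assumptions made in the Preliminaries. The only other subtlety is that Ville's inequality as usually stated bounds $\sup_{t\leq T}$ for a finite horizon $T$ and then passes to $T\to\infty$ by monotone convergence, which works without issue since the events $\{\sup_{t\leq T} Y_t\geq \lambda\}$ are increasing in $T$.
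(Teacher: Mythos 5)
Your proposal is correct and follows essentially the same route as the paper's proof: build a nonnegative supermartingale from $V$ by stopping/freezing at the first time the target is reached or the level $\tfrac{1}{1-p}$ is crossed, show the stopping time is almost surely finite via the $\eps$-decrease (the ranking-supermartingale argument), and apply the maximal inequality with $\lambda=\tfrac{1}{1-p}$ together with the Initial and Safety conditions. The only differences are cosmetic — you use the stopped process $V(\mathbf{x}_{t\wedge\tau})$ where the paper explicitly redefines the frozen values to $0$ and $\tfrac{1}{1-p}$, and you derive $\mathbb{E}[\tau]<\infty$ directly by telescoping rather than citing the convergence theorem and an external proposition.
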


The complete proof of Theorem~\ref{thm:rssmreachsafe} is provided in the Appendix. However, in what follows we sketch the key ideas behind our proof, in order to illustrate the applicability of martingale theory to reasoning about stochastic systems which we believe to have significant potential for applications beyond the scope of this work. To prove the theorem, we first show that an $\eps$-RASM $V$ induces a {\em supermartingale}~\citep{Williams91} in the probability space over the set of all trajectories that start in an initial state $\mathbf{x}_0\in\Init$. Intuitively, a supermartingale in a probability space $(\Omega,\mathcal{F},\mathbb{P})$ is a stochastic process $(X_t)_{t=0}^\infty$ such that, for each $t\in\mathbb{N}_0$, the expected value of $X_{t+1}$ conditioned on the value of $X_t$ is less than or equal to $X_t$. We formalize this definition together with the notion of conditional expectation and provide an overview of definitions and results form martingale theory that we use in our proof in the Appendix.

Now, let $(\Omega_{\mathbf{x}_0},\mathcal{F}_{\mathbf{x}_0},\mathbb{P}_{\mathbf{x}_0})$ be the probability space of trajectories that start in $\mathbf{x}_0$. Then, for each time step $t\in\mathbb{N}_0$, we define a random variable
\begin{equation*}
X_t(\rho) = \begin{cases}
    V(\mathbf{x}_t), &\text{if } \mathbf{x}_i\not\in\Target \text{ and } V(\mathbf{x}_i) < \frac{1}{1-p} \\
    &\text{ for each } 0\leq i \leq t\\
    0, &\text{if } \mathbf{x}_i\in\Target \text{ for some }0\leq i\leq t \\
    &\text{ and } V(\mathbf{x}_j) < \frac{1}{1-p} \text{ for each }0\leq j\leq i\\
    \frac{1}{1-p}, &\text{otherwise}
\end{cases}
\end{equation*}
for each trajectory $\rho=(\mathbf{x}_t,\mathbf{u}_t,\omega_t)_{t\in\mathbb{N}_0}\in\Omega_{\mathbf{x}_0}$. In other words, the value of $X_t$ is equal to the value of $V$ at $\mathbf{x}_t$, unless either the target set $\Target$ has been reached first in which case we set all future values of $\mathcal{X}_t$ to $0$, or a state in which $V$ exceeds $\frac{1}{1-p}$ has been reached first in which case we set all future values of $\mathcal{X}_t$ to $\frac{1}{1-p}$. Then, since $V$ satisfies the Nonnegativity and the Expected decrease condition of RASMs, we may show that $(X_t)_{t=0}^\infty$ is a supermartingale. in the probability space $(\Omega_{\mathbf{x}_0},\mathcal{F}_{\mathbf{x}_0},\mathbb{P}_{\mathbf{x}_0})$.

Next, we show that the nonnegative supermartingale $(X_t)_{t=0}^\infty$ with probability $1$ converges to and reaches $0$ or a value that is greater than or equal to $\frac{1}{1-p}$. To do this, we first employ the Supermartingale Convergence Theorem (see the Appendix) which states that every nonnegative supermartingale converges to some value with probability $1$. We then use the fact that, in the Expected decrease condition of RASMs, the decrease in expected value is strict and by at least $\eps>0$, in order to conclude that this value is reached and has to be either $0$ or greater than or equal to $\frac{1}{1-p}$.

Finally, we use another classical result from martingale theory (see the Appendix) which states that, given a nonnegative supermartingale $(X_t)_{t=0}^\infty$ and $\lambda>0$,
\[ \mathbb{P}\Big[ \sup_{i\geq 0}X_i \geq \lambda \Big] \leq \frac{\mathbb{E}[X_0]}{\lambda}. \]
Plugging $\lambda=\frac{1}{1-p}$ into the above inequality, it follows that $\mathbb{P}_{\mathbf{x}_0}[ \sup_{i\geq 0}X_i \geq \frac{1}{1-p} ] \leq (1-p)\cdot \mathbb{E}_{\mathbf{x}_0}[X_0] \leq 1-p$. The second inequality follows since $X_0(\rho) = V(\mathbf{x}_0)\leq 1$ for every $\rho\in\Omega_{\mathbf{x}_0}$ by the Initial condition of RASMs. Hence, as $(X_t)_{t=0}^\infty$ with probability $1$ either reaches $0$ or a value that is greater than or equal to $\frac{1}{1-p}$, we conclude that $(X_t)_{t=0}^\infty$ reaches $0$ without reaching a value that is greater than or equal to $\frac{1}{1-p}$ with probability at least $p$. By the definition of each $X_t$ and by the Safety condition of RASMs, this implies that with probability at least $p$ the system will reach the target set $\Target$ without reaching the unsafe set $\Unsafe$, i.e.~that $\mathbb{P}_{\mathbf{x}_0}[ \ReachSafe(\Target,\Unsafe) ] \geq p$.

\smallskip\noindent{\bf Probabilistic safety} In order to solve the probabilistic safety problem and verify that a control policy guarantees that the unsafe set $\Unsafe$ is not reached with probability at least $p$, we may modify the Expected decrease condition of RASMs by setting $\Target=\emptyset$. Thus, RASMs are also effective for the probabilistic safety problem. This claim follows immediately from our proof of Theorem~\ref{thm:rssmreachsafe}. In this case and if we set $\eps=0$, then our RASMs coincide with stochastic barrier functions of~\citep{PrajnaJP07}. However, if $\Target$ is not empty, then we must have $\eps>0$ in order to enforce convergence and reachability of $\Target$.

\smallskip\noindent{\bf Extension to $p=1$ and $\Unsafe=\emptyset$ and comparison to RSMs} So far, we have only considered $p\in[0,1)$. The difficulty in the case $p=1$ arises since the value $\frac{1}{1-p}$ in the Safety and the Expected decrease conditions in Definition~\ref{def:rssm} would not be well-defined. However, if $\Unsafe=\emptyset$, then the Safety condition need not be imposed at any state. Moreover, it follows directly from our proof that imposing the expected decrease condition at all states in $\mathcal{X}\backslash\Target$ makes RASMs sound for certifying probability~$1$ reachability. In fact, in this special case our RASMs reduce to the RSMs of~\citep{lechner2021stability}. The key novelty of our RASMs over RSMs is that we also employ {\em level set reasoning} in order to obtain probabilistic reach-avoid guarantees, thus presenting a true {\em stochastic extension of Lyapunov functions} that allow reasoning both about reach-avoid specifications as well as quantitative reasoning about the probability with which they are satisfied. In contrast, RSMs do not reason about level sets and can only certify probability~$1$ reachability.

\section{Learning Reach-avoid Policies}\label{sec:algo}

We now present our algorithm for learning policies with reach-avoid guarantees, which learns a policy together with an RASM certificate. The algorithm consists of two modules called {\em learner} and {\em verifier}, which are composed into a loop. In each loop iteration, the learner learns a policy together with an RASM candidate as two neural networks $\pi_{\theta}$ and $V_{\nu}$, with $\theta$ and $\nu$ being vectors of neural network parameters. The verifier then formally verifies whether the learned RASM candidate is indeed an RASM for the system and the learned policy. If the answer is positive, then the algorithm concludes that the learned policy provides formal reach-avoid guarantees. Otherwise, the verifier computes a counterexample which shows that the learned RASM candidate is not an RASM. The counterexample is passed to the learner and used to modify the loss function towards learning a new policy and an RASM candidate. The loop is repeated until either a candidate is successfully verified or the algorithm reaches a specified timeout. The algorithm is presented in Algorithm~\ref{alg:algorithm}. 
We note that our algorithm can also {\em verify} whether a given Lipschitz continuous policy provides reach-avoid guarantees, by fixing the policy only learning the RASM neural network.

\begin{algorithm}[t]
\caption{Algorithm for learning reach-avoid policies}
\label{alg:algorithm}
\begin{algorithmic}[1]
\STATE \textbf{Input} $f$, $d$, $\mathcal{X}$, $\Init,\Target,\Unsafe$, $L_f$, $p\in[0,1]$
\STATE \textbf{Parameters} mesh $\tau>0$, number of samples $N\in\mathbb{N}$, regularization constant $\lambda > 0$

\STATE $\pi_\theta \leftarrow$ trained by PPO
\STATE $\tilde{\mathcal{X}} \leftarrow $ discretization of $\mathcal{X}$ with mesh $\tau$
\STATE $C_{\text{init}},\, C_{\text{unsafe}},\, C_{\text{decrease}} \leftarrow \tilde{\mathcal{X}}\cap\Init,\, \tilde{\mathcal{X}}\cap\Unsafe, \, \tilde{\mathcal{X}}\cap(\mathcal{X}\backslash\Target)$
\STATE $V_\nu \leftarrow$ trained by minimizing the loss function
\WHILE{timeout not reached}

\STATE $L_\pi, L_V \leftarrow$ Lipschitz constants of $\pi_\theta$, $V_\nu$
\STATE $K\leftarrow L_V \cdot (L_f \cdot (L_\pi + 1) + 1)$
\STATE $\tilde{\mathcal{X}_e} \leftarrow$ vertices of discr. $\tilde{\mathcal{X}}$ whose adjacent cells intersect $\mathcal{X}\backslash\Target$ and contain $\mathbf{x}$ s.t.~$V_\nu(\mathbf{x})<\frac{1}{1-p}$
\STATE $\text{Cells}_{\Init},\, \text{Cells}_{\Unsafe} \leftarrow$ discr. cells that intersect $\Init,\, \Unsafe$
\IF{$\exists \tilde{\mathbf{x}}\in\tilde{\mathcal{X}_e} \cap (\mathcal{X}\backslash\Target)$ s.t.~$\mathbb{E}_{\omega\sim d}[ V_\nu ( f(\tilde{\mathbf{x}}, \pi(\tilde{\mathbf{x}}), \omega) ) ] \geq V_\nu(\tilde{\mathbf{x}}) - \tau \cdot K$ and $V_\nu(\tilde{\mathbf{x}})<\frac{1}{1-p}$}
\STATE $C_{\text{decrease}} \leftarrow C_{\text{decrease}} \cup \{\mathbf{x}\}$
\ELSIF{$\exists \text{cell} \in \text{Cells}_{\Init}$ s.t.~$\sup_{\mathbf{x}\in\text{cell}}V_\nu(\mathbf{x}) > 1$}
\STATE $C_{\text{init}} \leftarrow C_{\text{init}} \cup (\{\text{vertices of cell}\} \cap\Init)$
\ELSIF{$\exists \text{cell} \in \text{Cells}_{\Unsafe}$ s.t.~$\inf_{\mathbf{x}\in\text{cell}}V_\nu(\mathbf{x}) < \frac{1}{1-p}$}
\STATE $C_{\text{unsafe}} \leftarrow C_{\text{unsafe}} \cup (\{\text{vertices of cell}\} \cap\Unsafe)$
\ELSE
\STATE \textbf{Return} Reach-avoid guarantee with probability $p$
\ENDIF
\STATE $V_\nu, \pi_\theta,  \leftarrow$ trained by minimizing the loss function
\STATE $\tilde{\mathcal{X}} \leftarrow$ refined discretization
\ENDWHILE
\STATE \textbf{Return} Unknown
\end{algorithmic}
\end{algorithm}

\smallskip\noindent{\bf Policy Initialization}
Learning two networks concurrently with multiple objectives can be unstable due to dependencies between the two networks and differences in the scale of the objective loss terms.
To mitigate these instabilities, we propose pre-training of the policy network so that our algorithm starts from a proper initialization.
In particular, from the given dynamical system and the safety specification, we induce a Markov decision process (MDP) intending to reach the target set while avoiding the unsafe set.
The reward term $r_t$ is given by $r_t := \Ind[\Target](\mathbf{x}_t) - \Ind[\Unsafe](\mathbf{x}_t)$ and we use proximal policy optimization (PPO)~\citep{schulman2017proximal} to train the policy.

\smallskip\noindent{\bf State Space Discretization} When it comes to verifying learned candidates, the key difficulty lies in checking the Expected decrease condition. This is because, in general, it is not possible to compute a closed form expression for the expected value of an RASM over successor system states, as both the policy and the RASM are neural networks. In order to overcome this difficulty, our algorithm discretizes the state space of the system. Given a {\em mesh} parameter $\tau>0$, a {\em discretization $\Discretization$} of $\mathcal{X}$ with mesh $\tau$ is a set of states such that, for every $\mathbf{x}\in \mathcal{X}$, there exists a state $\tilde{\mathbf{x}}\in\Discretization$ such that $||\mathbf{x}-\tilde{\mathbf{x}}||_1<\tau$. Due to $\mathcal{X}$ being compact and therefore bounded, for any $\tau>0$ it is possible to compute its finite discretization with mesh $\tau$ by simply considering vertices of a grid with sufficiently small cells. Note that $f$, $\pi_{\theta}$ and $V_{\nu}$ are all continuous, hence due to $\mathcal{X}$ being compact $f$, $\pi_{\theta}$ and $V_{\nu}$ are also Lipschitz continuous. This will allow us to verify that the Expected decrease condition is satisfied by checking a slightly stricter condition only at the vertices of the discretization grid.
\begin{figure}
\centering
    \includegraphics[width=8.3cm]{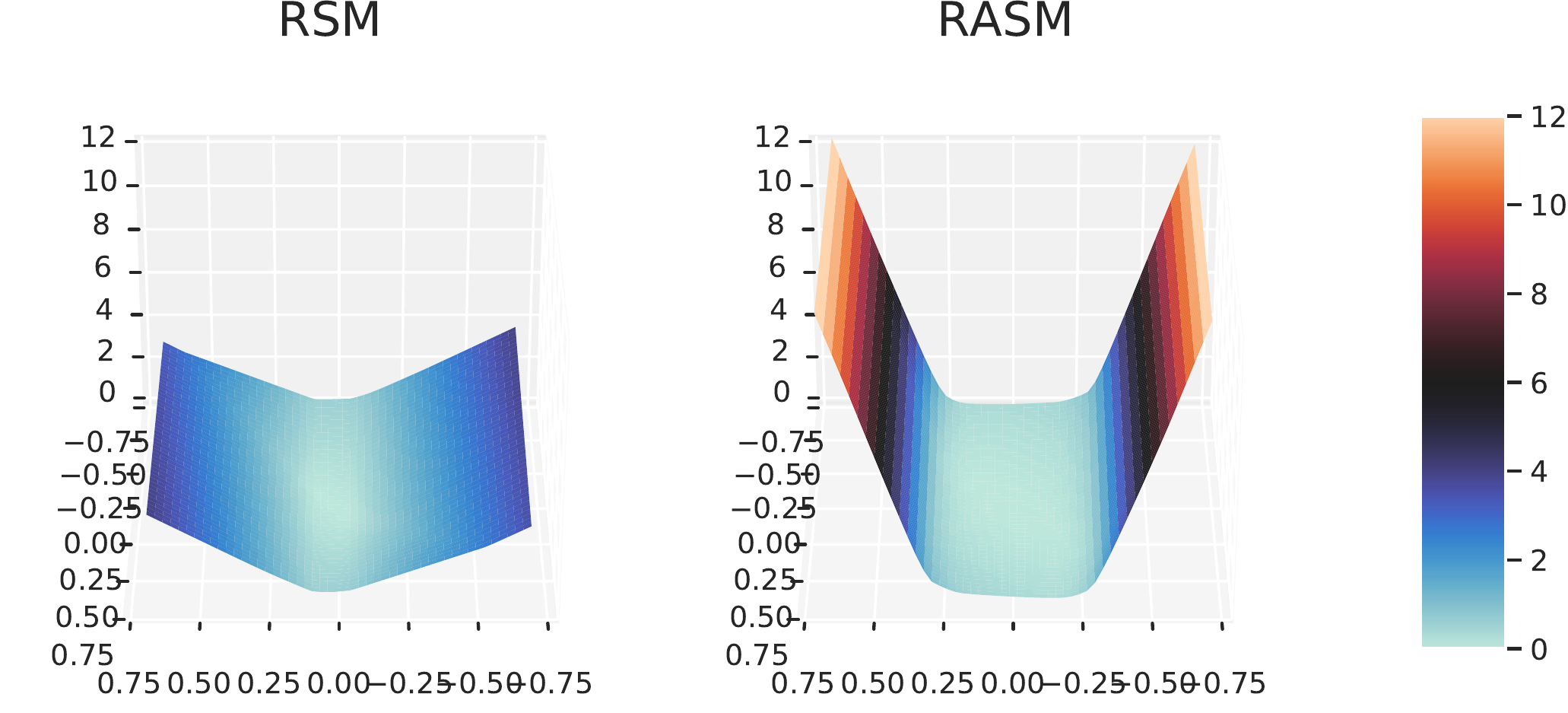}
  \caption{Visualization of a neural network RSM \cite{lechner2021stability} and our RASM on the inverted pendulum task. The RASM provides better probability bounds of reaching the unsafe states.}
    \label{fig:rasm}
\end{figure}
The initial discretization $\tilde{\mathcal{X}}$ is also used to initialize counterexample sets used by the learner. In particular, the learner initializes three sets $C_{\text{init}}= \tilde{\mathcal{X}}\cap\Init$, $C_{\text{unsafe}}=\tilde{\mathcal{X}}\cap\Unsafe$ and $C_{\text{decrease}}=\tilde{\mathcal{X}}\cap(\mathcal{X}\backslash\Target)$. These sets will later be extended by counterexamples computed by the verifier. Conversely, the discretization used by the verifier for checking the defining properties of RASMs will at each iteration of the loop be refined by a discretization with a smaller mesh, in order to relax the conditions that are checked by the verifier.

\smallskip\noindent{\bf Verifier}
We now describe the verifier module of our algorithm. Suppose that the learner has learned a policy $\pi_{\theta}$ and an RASM candidate $V_{\nu}$. Since $V_{\nu}$ is a neural network, we know that it is a continuous function. Furthermore, we design the learner to apply a softplus activation function to the output layer of $V_{\nu}$, which ensures that the Nonnegativity condition of RASMs is satisfied by default. Thus, the verifier only needs to check the Initial, Safety and Expected decrease conditions in Definition~\ref{def:rssm}.

Let $L_f$, $L_\pi$ and $L_V$ be the Lipschitz constants of $f$, $\pi_{\theta}$ and $V_{\nu}$, respectively. We assume that a Lipschitz constant for the dynamics function $f$ is provided, and use the method of~\citep{SzegedyZSBEGF13} to compute Lipschitz constants of neural networks $\pi_{\theta}$ and $V_{\nu}$. To verify the Expected decrease condition, the verifier collects the superset $\tilde{\mathcal{X}_e}$ of discretization points whose adjacent grid cells contain a non-target state and over which $V_{\nu}$ attains a value that is smaller than $\frac{1}{1-p}$. This set is computed by first collecting all cells that intersect $\mathcal{X}\backslash\Target$, then using interval arithmetic abstract interpretation (IA-AI)~\citep{CousotC77,Gowal18} which propagates interval bounds across neural network layers in order to bound from below the minimal value that $V_{\nu}$ attains over each collected cell, and finally collecting vertices of all cells at which this lower bound is less than $\frac{1}{1-p}$. The verifier then checks a stricter condition for each state $\tilde{\mathbf{x}}\in\tilde{\mathcal{X}_e}$:
\begin{equation}\label{eq:expdecstricter}
   \mathbb{E}_{\omega\sim d}\Big[ V_{\nu} \Big( f(\tilde{\mathbf{x}}, \pi_{\theta}(\tilde{\mathbf{x}}), \omega) \Big) \Big] < V_{\nu}(\tilde{\mathbf{x}}) - \tau \cdot K,
\end{equation}
where $K=L_V \cdot (L_f \cdot (L_\pi + 1) + 1)$. The expected value in eq.~\eqref{eq:expdecstricter} is also bounded from above via IA-AI, where one partitions the support of $d$ into intervals, propagates intervals and multiplies each interval bound by its probability weight in order to bound the expected value of a neural network function over a probability distribution. Due to space restrictions, we provide more details on expected value computation in the Appendix and note that this method requires that the probability distribution $d$ either has bounded support or is a product of independent univariate distributions.

In order to verify the Initial condition, the verifier collects the set $\text{Cells}_{\Init}$ of all cells of the discretization grid that intersect the initial set $\Init$. Then, for each $\text{cell}\in \text{Cells}_{\Init}$, it checks whether
\begin{equation}\label{eq:initialstricter}
    \sup_{\mathbf{x}\,\in\,\text{cell}}V_{\nu}(\mathbf{x}) > 1,
\end{equation}
where the supremum of $V_{\nu}$ over the cell is bounded from above by using IA-AI. Similarly, to verify the Unsafe condition, the verifier collects the set $\text{Cells}_{\Unsafe}$ of all cells of the discretization grid that intersect the unsafe set $\Unsafe$. Then, for each $\text{cell}\in \text{Cells}_{\Unsafe}$, it uses IA-AI to check whether
\begin{equation}\label{eq:unsafestricter}
    \inf_{\mathbf{x}\,\in\,\text{cell}}V_{\nu}(\mathbf{x}) < \frac{1}{1-p}.
\end{equation}

If the verifier shows that $V_{\nu}$ satisfies eq.~\eqref{eq:expdecstricter} for each $\tilde{\mathbf{x}}\in\tilde{\mathcal{X}_e}$, eq.~\eqref{eq:initialstricter} for each $\text{cell}\in \text{Cells}_{\Init}$ and eq.~\eqref{eq:unsafestricter} for each $\text{cell}\in \text{Cells}_{\Unsafe}$, it concludes that $V_{\nu}$ is an RASM. Otherwise, if a counterexample $\tilde{\mathbf{x}}$ to eq.~\eqref{eq:expdecstricter} is found and we have $\tilde{\mathbf{x}}\in\mathcal{X}\backslash\Target$ and $V_{\nu}(\mathbf{x})<\frac{1}{1-p}$, it is added to $C_{\text{decrease}}$. Similarly, if counterexample cells to eq.~\eqref{eq:initialstricter} and eq.~\eqref{eq:unsafestricter} are found, all their vertices that are contained in $\Init$ and $\Unsafe$ are added to $C_{\text{init}}$ and $C_{\text{unsafe}}$, respectively.

The following theorem shows that checking the above conditions is sufficient to formally verify whether an RASM candidate is indeed an RASM. The proof follows by exploiting the fact that $f$, $\pi_{\theta}$ and $V_{\nu}$ are all Lipschitz continuous and that $\mathcal{X}$ is compact, and we include it in the Appendix.

\begin{theorem}\label{theorem:verifier}
Suppose that the verifier verifies that $V_{\nu}$ satisfies eq.~\eqref{eq:expdecstricter} for each $\tilde{\mathbf{x}}\in\tilde{\mathcal{X}_e}$, eq.~\eqref{eq:initialstricter} for each $\text{cell}\in \text{Cells}_{\Init}$ and eq.~\eqref{eq:unsafestricter} for each $\text{cell}\in \text{Cells}_{\Unsafe}$. Then the function $V_{\nu}$ is an RASM for the system with respect to $\Target$, $\Unsafe$ and $p$.
\end{theorem}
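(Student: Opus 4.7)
The function $V_\nu$ is nonnegative by construction (softplus output layer), so I need only establish the Initial, Safety, and Expected decrease conditions of Definition~\ref{def:rssm}. The Initial and Safety conditions are immediate consequences of soundness of interval arithmetic abstract interpretation: for any $\mathbf{x}\in\Init$, the point $\mathbf{x}$ lies in some grid cell of $\Discretization$ that intersects $\Init$ and therefore belongs to $\text{Cells}_{\Init}$; since eq.~\eqref{eq:initialstricter} does not hold for that cell, the IA-AI upper bound on $\sup_{\mathbf{x}'\in\text{cell}}V_\nu(\mathbf{x}')$ is at most $1$, and by soundness this upper bound dominates $V_\nu(\mathbf{x})$. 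The symmetric argument with $\text{Cells}_{\Unsafe}$ and eq.~\eqref{eq:unsafestricter} discharges the Safety condition.

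\textbf{Expected decrease.} Fix any $\mathbf{x}\in\mathcal{X}\setminus\Target$ with $V_\nu(\mathbf{x})\leq\frac{1}{1-p}$. By the definition of a mesh-$\tau$ discretization there exists a vertex $\tilde{\mathbf{x}}\in\Discretization$ with $\|\mathbf{x}-\tilde{\mathbf{x}}\|_1<\tau$, and by construction $\tilde{\mathbf{x}}\in\tilde{\mathcal{X}_e}$ since the adjacent cell containing $\mathbf{x}$ both intersects $\mathcal{X}\setminus\Target$ and houses a point at which $V_\nu$ is bounded by $\frac{1}{1-p}$. The verified condition eq.~\eqref{eq:expdecstricter} then gives
\begin{equation*}
\mathbb{E}_{\omega\sim d}\bigl[V_\nu(f(\tilde{\mathbf{x}},\pi_\theta(\tilde{\mathbf{x}}),\omega))\bigr] < V_\nu(\tilde{\mathbf{x}}) - \tau K.
\end{equation*}
Lipschitz continuity of $V_\nu$ yields $|V_\nu(\mathbf{x})-V_\nu(\tilde{\mathbf{x}})|\leq L_V\tau$. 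Chaining the Lipschitz constants $L_\pi$ of $\pi_\theta$ and $L_f$ of $f$ gives, uniformly in $\omega$,
\begin{equation*}
\bigl\|f(\mathbf{x},\pi_\theta(\mathbf{x}),\omega)-f(\tilde{\mathbf{x}},\pi_\theta(\tilde{\mathbf{x}}),\omega)\bigr\|_1 \leq L_f(1+L_\pi)\tau,
\end{equation*}
so another application of $L_V$ together with $|\mathbb{E}[g]-\mathbb{E}[h]|\leq\mathbb{E}[|g-h|]$ yields
\begin{equation*}
\bigl|\mathbb{E}_\omega[V_\nu(f(\mathbf{x},\pi_\theta(\mathbf{x}),\omega))] - \mathbb{E}_\omega[V_\nu(f(\tilde{\mathbf{x}},\pi_\theta(\tilde{\mathbf{x}}),\omega))]\bigr| \leq L_V L_f(1+L_\pi)\tau.
\end{equation*}
Assembling these three inequalities and using $K = L_V(L_f(L_\pi+1)+1)$, the $\tau$-slack cancels exactly and one obtains $V_\nu(\mathbf{x}) > \mathbb{E}_\omega[V_\nu(f(\mathbf{x},\pi_\theta(\mathbf{x}),\omega))]$.

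\textbf{Uniform $\eps$ and main obstacle.} To produce the uniform margin $\eps>0$ required by the Expected decrease condition, I exploit the fact that $\tilde{\mathcal{X}_e}$ is finite (it is a subset of a finite discretization). Let
\begin{equation*}
\eps_0 = \min_{\tilde{\mathbf{x}}\in\tilde{\mathcal{X}_e}}\Bigl(V_\nu(\tilde{\mathbf{x}})-\tau K - \mathbb{E}_\omega[V_\nu(f(\tilde{\mathbf{x}},\pi_\theta(\tilde{\mathbf{x}}),\omega))]\Bigr),
\end{equation*}
which is strictly positive because each summand is strictly positive by eq.~\eqref{eq:expdecstricter} and the minimum of finitely many positive reals is positive. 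Rerunning the Lipschitz chain with this extra slack produces $V_\nu(\mathbf{x})\geq\mathbb{E}_\omega[V_\nu(f(\mathbf{x},\pi_\theta(\mathbf{x}),\omega))]+\eps_0$ for every admissible $\mathbf{x}$, completing the verification of all four RASM conditions. The main obstacle is the membership claim $\tilde{\mathbf{x}}\in\tilde{\mathcal{X}_e}$: one must argue carefully that the IA-AI screening criterion used to build $\tilde{\mathcal{X}_e}$ retains every vertex whose $\tau$-neighborhood can host a point of $\mathcal{X}\setminus\Target$ satisfying $V_\nu\leq\frac{1}{1-p}$, bridging the gap between the strict cutoff $<\frac{1}{1-p}$ used in the definition of $\tilde{\mathcal{X}_e}$ and the non-strict RASM condition $\leq\frac{1}{1-p}$. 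This is the delicate coupling between $\tau$, $L_V$, and the threshold that the formal proof must make precise.
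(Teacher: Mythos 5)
Your proof follows essentially the same route as the paper's: Nonnegativity from the softplus output layer, the Initial and Safety conditions from soundness of IA-AI over the covering cells, and the Expected decrease condition via the identical Lipschitz chaining with $K = L_V(L_f(L_\pi+1)+1)$ and the same uniform $\eps$ taken as a minimum over the finite set $\tilde{\mathcal{X}_e}$. The membership issue you flag ($\tilde{\mathbf{x}}\in\tilde{\mathcal{X}_e}$, and the strict-versus-nonstrict threshold at $\frac{1}{1-p}$) is dispatched in the paper with a one-line ``by construction'' appeal to the fact that IA-AI underapproximates $\inf_{\mathbf{x}\in\text{cell}}V_\nu(\mathbf{x})$, so you have if anything been more careful than the published argument on that point.
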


\smallskip\noindent{\bf Learner}
A policy and an RASM candidate are learned by minimizing the loss function
\begin{equation*}
\begin{split}
    \loss(\theta, \nu) = &\loss_{\text{Init}}(\nu) + \loss_{\text{Unsafe}}(\nu) + \loss_{\text{Decrease}}(\theta,\nu)  \\
    &+ \lambda \cdot\big(\loss_{\textrm{Lipschitz}}(\theta) + \loss_{\textrm{Lipschitz}}(\nu)\big).
\end{split}
\end{equation*}
The first three loss terms are used to guide the learner towards learning a true RASM by forcing the learned candidate towards satisfying the Initial, Safety and Expected decrease conditions in Definition~\ref{def:rssm}. They are defined as follows:
\begin{equation*}
\begin{split}
    &\loss_{\text{Init}}(\nu) = \max_{\mathbf{x} \in C_{\text{init}}} \{V_\nu(\mathbf{x})-1, 0 \} \\
    &\loss_{\text{Unsafe}}(\nu) = \max_{\mathbf{x} \in C_{\text{unsafe}}}\{\frac{1}{1-p}-V_\nu(\mathbf{x}),0 \} \\
    &\loss_{\text{Decrease}}(\theta,\nu)  = \frac{1}{|C_{\text{decrease}}|} \cdot \\
    &\sum_{\mathbf{x}\in C_{\text{decrease}}}\Big( \max\Big\{ \sum_{\omega_1,\dots, \omega_N \sim \mathcal{N}}\frac{V_{\nu}\big(f(\mathbf{x},\pi_\theta(\mathbf{x}),\omega_i)\big)}{N} \\
    &\hspace{1cm} -  V_{\theta}(\mathbf{x})  + \tau \cdot K, 0\Big\} \Big)
\end{split}
\end{equation*}
Each loss term is designed to incur a loss at a state whenever that state violates the corresponding condition in Definition~\ref{def:rssm} that needs to be checked by the verifier. 
In the expression for $\loss_{\text{Decrease}}(\theta,\nu)$, we approximate the expected value of $V_{\nu}$ by taking the mean value of $V_{\nu}$ at $N$ sampled successor states, where $N\in\mathbb{N}$ is an algorithm parameter. This is necessary as it is not possible to compute a closed form expression for the expected value of a neural network~$V_{\nu}$.

The last loss term $\lambda \cdot (\loss_{\textrm{Lipschitz}}(\theta) + \loss_{\textrm{Lipschitz}}(\nu))$ is the regularization term used to guide the learner towards a policy and an RASM candidate with Lipschitz constants below a tolerable threshold $\rho$, with $\lambda>0$ being a regularization constant. By preferring networks with small Lipschitz constants, we allow the verifier to use a wider mesh, which significantly speeds up the verification process. The regularization term for $\pi_{\theta}$ (and analogously for $V_{\nu}$) is defined~via
\begin{equation*}
    \loss_{\text{Lipschitz}}(\theta) = \max\Big\{  \prod_{W,b \in \theta} \max_j \sum_{i} |W_{i,j}| - \rho, 0 \Big\},
\end{equation*}
where $W$ and $b$ weight matrices and bias vectors for each layer in $\pi_\theta$. Finally, in our implementation we also add an auxiliary loss term that does not enforce any of the defining conditions of RASMs, however it is used to guide the learner towards a candidate that attains the global minimum in a state that is contained within the target set $\Target$. We empirically observed that this term sometimes helps the updated policy from diverging from its objective to stabilize the system. Due to space restrictions, details are provided in the Appendix.

We remark that the loss function 
is always nonnegative but is not necessarily equal to $0$ even if $V_{\nu}$ satisfies all conditions checked by the verifier and if Lipschitz constants are below the specified thresholds. This is because the expected values in $\loss_{\text{Decrease}}(\theta,\nu)$ are approximated via sample means. However, in the following theorem we show that in this case $\loss(\theta,\nu) \rightarrow 0$ with probability $1$ as we add independent samples. 
The claim follows from the Strong Law of Large Numbers and the proof can be found in the Appendix.

\begin{theorem}\label{thm:loss}
Let $N$ be the number of samples used to approximate expected values in $\loss_{\text{Decrease}}(\theta,\nu)$. Suppose that $V_{\nu}$ satisfies eq.~\eqref{eq:expdecstricter} for each $\tilde{\mathbf{x}}\in\tilde{\mathcal{X}_e}$, eq.~\eqref{eq:initialstricter} for each $\text{cell}\in \text{Cells}_{\Init}$ and eq.~\eqref{eq:unsafestricter} for each $\text{cell}\in \text{Cells}_{\Unsafe}$. Suppose that Lipschitz constants of $\pi_{\theta}$ and $V_{\nu}$ are below the thresholds specified by $\loss_{\text{Lipschitz}}(\theta)$ and $\loss_{\text{Lipschitz}}(\nu)$ and that the samples in $\loss_{\text{Decrease}}(\theta,\nu)$ are independent. Then $\lim_{N\rightarrow \infty} \loss(\theta,\nu) = 0$ with probability $1$.
\end{theorem}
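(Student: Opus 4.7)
The plan is to decompose $\loss(\theta,\nu)$ into its five summands and show each vanishes with probability one as $N \to \infty$; since the number of summands is finite, the whole loss will then vanish almost surely. Four of the five are in fact \emph{deterministically} zero under the hypotheses, independently of $N$. The two Lipschitz terms $\loss_{\textrm{Lipschitz}}(\theta)$ and $\loss_{\textrm{Lipschitz}}(\nu)$ are zero by assumption. The verifier's cell-level condition eq.~\eqref{eq:initialstricter} being satisfied implies $\sup_{\mathbf{x}\in\mathrm{cell}}V_\nu(\mathbf{x}) \leq 1$ for every cell intersecting $\Init$, so $V_\nu(\mathbf{x}) \leq 1$ on $C_{\text{init}} \subseteq \Init$ and $\loss_{\text{Init}}(\nu) = 0$; the analogous argument via eq.~\eqref{eq:unsafestricter} yields $\loss_{\text{Unsafe}}(\nu) = 0$. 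Hence the only place randomness enters the loss is the Monte-Carlo term $\loss_{\text{Decrease}}(\theta,\nu)$, and the theorem reduces to showing this term vanishes almost surely.

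For $\loss_{\text{Decrease}}(\theta,\nu)$, I would first fix $\mathbf{x} \in C_{\text{decrease}}$ and observe that the map $\omega \mapsto V_\nu(f(\mathbf{x},\pi_\theta(\mathbf{x}),\omega))$ is continuous and takes values in $V_\nu(\mathcal{X})$, which is bounded because $\mathcal{X}$ is compact and $V_\nu$ is continuous. The map is therefore integrable with respect to $d$, and the independence hypothesis on $\omega_1,\ldots,\omega_N$ allows a direct application of the Strong Law of Large Numbers:
\[ \frac{1}{N}\sum_{i=1}^{N} V_\nu\big(f(\mathbf{x},\pi_\theta(\mathbf{x}),\omega_i)\big) \xrightarrow[N\to\infty]{\mathrm{a.s.}} \mathbb{E}_{\omega\sim d}\big[V_\nu(f(\mathbf{x},\pi_\theta(\mathbf{x}),\omega))\big]. \]
By the verifier's condition eq.~\eqref{eq:expdecstricter} the right-hand side is strictly less than $V_\nu(\mathbf{x}) - \tau \cdot K$, so the argument of the outer $\max\{\cdot,0\}$ in the $\mathbf{x}$-summand of $\loss_{\text{Decrease}}$ is eventually strictly negative on an event of probability one, and the summand itself tends to $0$ almost surely.

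Finally, the set $C_{\text{decrease}}$ is finite (it starts as a subset of the finite discretization $\tilde{\mathcal{X}}$ and is grown by only finitely many counterexamples over prior loop iterations), so intersecting the finitely many probability-one events produced above — one per $\mathbf{x} \in C_{\text{decrease}}$ — still gives a probability-one event on which every summand vanishes in the limit. Combined with the deterministic zeroness of $\loss_{\text{Init}}, \loss_{\text{Unsafe}}$ and the two Lipschitz terms, this yields $\lim_{N\to\infty}\loss(\theta,\nu) = 0$ almost surely. I do not expect any real obstacle here: the step that needs the most care is matching the verifier's conditions summand-by-summand to the loss and confirming the SLLN integrability hypothesis, both of which reduce to compactness of $\mathcal{X}$ and continuity of $V_\nu\circ f$ (built into the architecture).
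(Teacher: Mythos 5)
Your proposal is correct and follows essentially the same route as the paper's proof: the Lipschitz, Initial, and Unsafe terms are shown to vanish deterministically from the hypotheses, and the Decrease term is handled by applying the Strong Law of Large Numbers pointwise over the finite set $C_{\text{decrease}}$ (integrability from compactness of $\mathcal{X}$ and continuity of $V_\nu$), with eq.~\eqref{eq:expdecstricter} making the argument of the outer $\max$ nonpositive in the limit. Your explicit intersection of the finitely many probability-one events is a small tidiness improvement over the paper's phrasing, but not a different argument.
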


\begin{table}
    \centering
    \begin{tabular}{c|cc}\toprule
         & RSM & RASM \\
         Environment & (reach-avoid extension) & (ours) \\\midrule
         2D system & 83.4\% & 93.3\%\\
         Inverted pendulum & 47.9\% & 92.1\% \\
         Collision avoidance & Fail &  90.4\% \\\bottomrule
    \end{tabular}
    \caption{Reach-avoid probability obtained by our method and by the naive extension of RSMs. In each case, we report the largest reach-avoid probability successfully verified by the respective method.}
    \label{tab:rsm}
\end{table}   


\section{Experiments}

We experimentally validate our method on 3 non-linear RL environments.
Since no available baseline provides reach-avoid guarantees of stochastic systems over the infinite time horizon, as well as sampling and discretization approaches can only reason over finite time horizons, we aim our experiment as a validation of algorithm \ref{alg:algorithm} in practice.
We will make our JAX \citep{jax2018github} implementation publicly available.

Our first two environments are a linear 2D system with non-linear control bounds and the stochastic inverted pendulum control problem. 
The linear 2D system is of the form $\mathbf{x}_{t+1} = A\mathbf{x}_t + B g(\mathbf{u}_t) + \omega_t$, where $g: u \mapsto\min(\max(u,-1,1))$ limits the admissible action of the policy and $\omega_t$ is sampled from a triangular noise distribution.
The inverted pendulum environment is taken from the OpenAI Gym \citep{gym} and made more difficult by adding noise perturbations to its state. 
Our third environment concerns a collision avoidance task. The objective of this environment is to navigate an agent to the target region while avoiding crashing into one of two obstacles. All environments express bounds on the admissible actions. Further details of all environments can be found in the Appendix.

The policy and RASM networks consist of two hidden layers (128 units each, ReLU). The RASM network has a single output unit with a softplus activation.  We run our algorithm with a timeout of 3 hours.

The goal of our first experiment is to empirically evaluate the ability of our approach to learn probabilistic reach-avoid policies and to understand the importance of combining reachability with level set reasoning towards safety in stochastic systems. For all tasks, we pre-train the policy networks using 100 iterations of PPO. To evaluate our approach, we run our algorithm with several probability thresholds and report the highest threshold for which a policy together with an RASM is successfully learned. In order to understand the importance of simultaneous reasoning about reachability and level sets, we then compare our approach with a much simpler extension of the method of~\citep{lechner2021stability} which learns RSMs to certify probability $1$ reachability but does not consider any form of safety specifications. In particular, we run the method of~\citep{lechner2021stability} without the safety constraint and, in case a valid RSM is found, we normalize the function such that the Nonnegativity and the Initial conditions of RASMs are satisfied. We then bound from below the smallest value that the RSM attains over the unsafe region, and extract the corresponding reach-avoid probability bound according to the Safety condition of RASMs. Note that, even though this extension also exploits the ideas behind the level set reasoning in our RASMs, it {\em first} performs reachability analysis and only {\em afterwards} considers safety.
We remark that there is no existing method that provides reach-avoid guarantees of stochastic systems over the infinite time horizon, i.e. there is no existing baseline to compare against, thus we compare our level set reasoning with the extension of~\citep{lechner2021stability} which is the closest related work.

Table \ref{tab:rsm} shows results of our first experiment. In particular, in the third column we see that our method successfully learns policies that provide high probability reach-avoid guarantees for all benchmarks. On the other hand, comparison to the second column shows that {\em simultaneous} reasoning about reachability and safety that is allowed by our RASMs provides significantly better probabilistic reach-avoid guarantees than when such reasoning is decoupled. Figure~\ref{fig:rasm} visualizes the RSM computed by the baseline and our RASM.

\begin{table}
    \centering
    \begin{tabular}{c|cc}\toprule
         &  $V_\nu$ &$V_\nu$ and $\pi_\theta$ \\\midrule
         2D system & Fail (10 iters.) & 96.7\% (4 iters.) \\
        Collision avoidance & Fail (9 iters.) & 80.9\% (3 iters.)\\
         Inverted pendulum & Fail (7 iters.) & Fail (7 iters.) \\\bottomrule
    \end{tabular}
    \caption{Reach-avoid probabilities obtained by repairing unsafe policies. Verifying a policy by only learning the RASM $V_\nu$ times out, while jointly optimizing $V_\nu$ and $\pi_\theta$ yields a valid RASM. In each case, we report the largest reach-avoid probability successfully verified by the respective method.}
    \label{tab:frozen}
\end{table}   

In our second experiment, we study how well our algorithm can \emph{repair} (or \emph{fine-tune}) an unsafe policy. In particular, we pre-train the policy network using only 20 PPO iterations. We then run our algorithm with fixed policy parameters $\theta$, i.e.~we only learn an RASM in order to verify a probabilistic reach-avoid guarantee provided by a pre-trained policy. Next, we run our Algorithm~\ref{alg:algorithm} with both $\nu$ and $\theta$ as trainable parameters. Table \ref{tab:frozen} shows that, compared to a standalone verification method, our algorithm is able to repair unsafe policies in practice. However, the inability to repair the inverted pendulum policy illustrates that a decent starting policy is necessary for our algorithm, emphasizing the importance of policy initialization. Since the Policy Initialization step in Algorithm~\ref{alg:algorithm} initialises the policy by using PPO with a reward function that encodes the reach-avoid specification, our second experiment also demonstrates that a policy initialised by using RL on a tailored reward function is not sufficient to learn a reach-avoid policy with guarantees and that the learned policy requires “correction” in order to provide reach-avoid guarantees. The “correction” is achieved precisely by keeping the policy parameters trainable in the learner-verifier framework and fine-tuning them.

\section{Conclusion}

In this work, we present a method for learning controllers for discrete-time stochastic non-linear dynamical systems with formal reach-avoid guarantees. Our method learns a policy together with a reach-avoid supermartingale (RASM), a novel notion that we introduce in this work. It solves several important problems, including control with reach-avoid guarantees, verification of reach-avoid properties for a fixed policy, or fine-tuning of a given policy that does not satisfy a reach-avoid property. We demonstrated the effectiveness of our approach on three RL benchmarks. An interesting future direction would be to study certified control and verification of more general properties in stochastic systems. Since the aim of AI safety and formal verification is to ensure that systems do not behave in undesirable ways and that safety violating events are avoided, we are not aware of any potential negative societal impacts of our work.

\bibliographystyle{plain}

\bibliography{aaai23}

\section{Acknowledgments}
This work was supported in part by the ERC-2020-AdG 101020093, ERC CoG 863818 (FoRM-SMArt) and the European Union’s Horizon 2020 research and innovation programme under the Marie Skłodowska-Curie Grant Agreement No.~665385.
Research was sponsored by the United States Air Force Research Laboratory and the United States Air Force Artificial Intelligence Accelerator and was accomplished under Cooperative Agreement Number FA8750-19-2-1000. The views and conclusions contained in this document are those of the authors and should not be interpreted as representing the official policies, either expressed or implied, of the United States Air Force or the U.S. Government. The U.S. Government is authorized to reproduce and distribute reprints for Government purposes notwithstanding any copyright notation herein. The research was also funded in part by the AI2050 program at Schmidt Futures (Grant G-22-63172) and Capgemini SE.

\newpage
\begin{center}
	\Large\textbf{Appendix}
\end{center}

\section{Overview of Martingale Theory}\label{sec:martingales}

\paragraph{Probability theory} A {\em probability space} is a triple $(\Omega,\mathcal{F},\mathbb{P})$, where $\Omega$ is a non-empty {\em sample space}, $\mathcal{F}$ is a $\sigma$-algebra over $\Omega$ (i.e.~a collection of subsets of $\Omega$ that contains the empty set $\emptyset$ and is closed under complementation and countable union operations), and $\mathbb{P}$ is a {\em probability measure} over $\mathcal{F}$, i.e.~a function $\mathbb{P}:\mathcal{F}\rightarrow[0,1]$ that satisfies Kolmogorov axioms~\cite{Williams91}. We call the elements of $\mathcal{F}$ {\em events}. Given a probability space $(\Omega,\mathcal{F},\mathbb{P})$, a {\em random variable} is a function $X:\Omega\rightarrow\mathbb{R}$ that is $\mathcal{F}$-measurable, i.e.~for each $a\in\mathbb{R}$ we have that $\{\omega\in\Omega\mid X(\omega)\leq a\}\in\mathcal{F}$. We use $\mathbb{E}[X]$ to denote the {\em expected value} of $X$. A {\em (discrete-time) stochastic process} is a sequence $(X_i)_{i=0}^{\infty}$ of random variables in $(\Omega,\mathcal{F},\mathbb{P})$.
	
\paragraph{Conditional expectation} In order to formally define supermartingales, we need to introduce conditional expectation. Let $X$ be a random variable in a probability space $(\Omega,\mathcal{F},\mathbb{P})$. Given a sub-$\sigma$-algebra $\mathcal{F}'\subseteq\mathcal{F}$, a {\em conditional expectation} of $X$ given $\mathcal{F}'$ is an $\mathcal{F}'$-measurable random variable $Y$ such that, for each $A\in\mathcal{F}'$, we have 
\[ \mathbb{E}[X\cdot\mathbb{I}(A)]=\mathbb{E}[Y\cdot\mathbb{I}(A)].\]
The function $\mathbb{I}(A):\Omega\rightarrow \{0,1\}$ is an {\em indicator function} of $A$, defined via $\mathbb{I}(A)(\omega)=1$ if $\omega\in A$, and $\mathbb{I}(A)(\omega)=0$ if $\omega\not\in A$. Intuitively, conditional expectation of $X$ given $\mathcal{F}'$ is an $\mathcal{F}'$-measurable random variable that behaves like $X$ whenever its expected value is taken over an event in $\mathcal{F}'$. It is known that a conditional expectation of a random variable $X$ given $\mathcal{F}'$ exists if $X$ is real-valued and nonnegative~\cite{Williams91}. Moreover, for any two $\mathcal{F}'$-measurable random variables $Y$ and $Y'$ which are conditional expectations of $X$ given $\mathcal{F}'$, we have that $\mathbb{P}[Y= Y']=1$. Therefore, the conditional expectation is almost-surely unique and we may pick any such random variable as a canonical conditional expectation and denote it by $\mathbb{E}[X\mid \mathcal{F}']$.

\paragraph{Supermartingales} We are now ready to define supermartingales. Let $(\Omega,\mathcal{F},\mathbb{P})$ be a probability space and $(\mathcal{F}_i)_{i=0}^\infty$ be an increasing sequence of sub-$\sigma$-algebras in $\mathcal{F}$, i.e.~$\mathcal{F}_0\subseteq\mathcal{F}_1\subseteq\dots\subseteq\mathcal{F}$. A nonnegative {\em supermartingale} with respect to $(\mathcal{F}_i)_{i=0}^\infty$ is a stochastic process $(X_i)_{i=0}^{\infty}$ such that each $X_i$ is $\mathcal{F}_i$-measurable, and $X_i(\omega)\geq 0$ and $\mathbb{E}[X_{i+1}\mid\mathcal{F}_i](\omega) \leq X_i(\omega)$ hold for each $\omega\in\Omega$ and $i\geq 0$. Intuitively, the second condition says that the expected value of $X_{i+1}$ given the value of $X_i$ has to decrease, and this requirement is formally captured via conditional expectation.

We now present two results that will be key ingredients in the proof of Theorem~1. The first is Doob's Supermartingale Convergence Theorem (see~\cite{Williams91}, Section~11) which shows that every nonnegative supermartingale converges almost-surely to some finite value. The second theorem (see~\cite{Kushner14}, Theorem~7.1) provides a bound on the probability that the value of the supemartingale ever exceeds some threshold, and it will allow us to reason about both probabilistic reachability and safety. This is a less standard result from martingale theory, so we prove it below. In what follows, let $(\Omega,\mathcal{F},\mathbb{P})$ be a probability space and $(\mathcal{F}_i)_{i=0}^\infty$ be an increasing sequence of sub-$\sigma$-algebras in $\mathcal{F}$. 

\begin{theorem}[Supermartingale convergence theorem]\label{thm:convergence}
Let $(X_i)_{i=0}^{\infty}$ be a nonnegative supermartingale with respect to $(\mathcal{F}_i)_{i=0}^\infty$. Then, there exists a random variable $X_{\infty}$ in $(\Omega,\mathcal{F},\mathbb{P})$ to which the supermartingale converges to with probability $1$, i.e.~$\mathbb{P}[\lim_{i\rightarrow\infty}X_i=X_{\infty}]=1$.
\end{theorem}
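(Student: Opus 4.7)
The plan is to follow the classical argument due to Doob, which controls oscillations of $(X_i)$ via an upcrossing inequality. First, for fixed rationals $a < b$, I would define the number of upcrossings $U_n[a,b]$ of the interval $[a,b]$ made by $X_0,\ldots,X_n$ using the standard recursive stopping-time construction: alternate between the first time after the previous downcrossing that $X_k \le a$ and the first subsequent time that $X_k \ge b$. The key lemma is Doob's upcrossing inequality, which for a supermartingale bounds
\[
(b-a)\,\mathbb{E}[U_n[a,b]] \;\le\; \mathbb{E}[(X_n - a)^{-}],
\]
derived by a pathwise telescoping argument combined with optional stopping at the bounded stopping times that delimit successive crossings.

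With this inequality in hand, nonnegativity of $X_i$ gives $(X_n - a)^{-} \le a$, so $\mathbb{E}[U_n[a,b]] \le a/(b-a)$ uniformly in $n$; monotone convergence as $n \to \infty$ then yields $U_\infty[a,b] < \infty$ almost surely. Since the divergence of the limit inferior from the limit superior of a real sequence is characterised by
\[
\{\liminf_i X_i < \limsup_i X_i\} \;=\; \bigcup_{a<b,\, a,b\in\mathbb{Q}}\, \{U_\infty[a,b] = \infty\},
\]
a countable union bound over rational pairs shows this event has probability zero. Hence $X_\infty := \lim_{i\to\infty} X_i$ exists in $[0,\infty]$ almost surely, and finiteness of $X_\infty$ almost surely follows from Fatou's lemma applied to the nonnegative sequence: $\mathbb{E}[X_\infty] \le \liminf_i \mathbb{E}[X_i] \le \mathbb{E}[X_0] < \infty$, where the last inequality iterates the supermartingale property. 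Redefining $X_\infty$ to be $0$ on the exceptional null set produces the required $\mathcal{F}$-measurable random variable.

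The main obstacle is establishing the upcrossing inequality itself, since the remaining steps are routine applications of monotone convergence, Fatou, and a countable union argument. The delicate points in that lemma are verifying that the stopping times delimiting successive crossings are adapted to $(\mathcal{F}_n)$, and carefully justifying the pathwise estimate that each completed upcrossing of $[a,b]$ contributes at least $b-a$ to the increments of the supermartingale while the final incomplete segment contributes at most $(X_n - a)^{-}$. Since the result is cited from Williams (1991) and is a textbook fact, for the paper I would state only the upcrossing lemma explicitly and refer to the reference for its full proof, keeping the exposition focused on how the convergence conclusion is consumed by the proof of Theorem~\ref{thm:rssmreachsafe}.
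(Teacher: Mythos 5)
Your proposal is the standard Doob upcrossing argument, which is precisely the proof behind the result the paper cites without reproving (Williams 1991, Section~11); the paper treats Theorem~\ref{thm:convergence} as a known textbook fact and only proves the companion maximal-inequality theorem. Your sketch is correct --- the upcrossing bound $(b-a)\,\mathbb{E}[U_n[a,b]]\le\mathbb{E}[(X_n-a)^-]\le a$ for nonnegative supermartingales, the countable union over rational pairs, and Fatou for finiteness of the limit are exactly the right steps --- and your concluding decision to state only the upcrossing lemma and defer to the reference matches what the paper actually does.
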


\begin{theorem}\label{thm:bound}
Let $(X_i)_{i=0}^{\infty}$ be a nonnegative supermartingale with respect to $(\mathcal{F}_i)_{i=0}^\infty$. Then, for every $\lambda>0$, we have
\[ \mathbb{P}\Big[ \sup_{i\geq 0}X_i \geq \lambda \Big] \leq \frac{\mathbb{E}[X_0]}{\lambda}. \]
\end{theorem}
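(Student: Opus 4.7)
The plan is to prove this via the classical stopped-process argument, often called Ville's inequality. First I would introduce the hitting time $T = \inf\{i \geq 0 : X_i \geq \lambda\}$, with the usual convention $T = \infty$ if the set is empty. Since each $X_i$ is $\mathcal{F}_i$-measurable and the sub-$\sigma$-algebras are increasing, $T$ is a stopping time with respect to $(\mathcal{F}_i)_{i=0}^\infty$. The key observation is that the events $\{T \leq n\}$ form an increasing sequence whose union is exactly $\{T < \infty\} = \{\sup_{i\geq 0} X_i \geq \lambda\}$, so by continuity of $\mathbb{P}$ from below it suffices to bound $\mathbb{P}[T \leq n]$ uniformly in $n$.

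Next I would consider the stopped process $Y_n = X_{T \wedge n}$. A standard fact is that the stopped version of a nonnegative supermartingale remains a nonnegative supermartingale with respect to $(\mathcal{F}_i)_{i=0}^\infty$; this follows from a short induction using the optional decomposition $X_{T\wedge(n+1)} = X_{T\wedge n} + \mathbb{I}(T > n)\cdot(X_{n+1} - X_n)$ together with the fact that $\{T > n\} \in \mathcal{F}_n$. Taking expectations then yields $\mathbb{E}[Y_n] \leq \mathbb{E}[Y_0] = \mathbb{E}[X_0]$.

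Now I would split $\mathbb{E}[Y_n]$ according to whether $T \leq n$ or $T > n$. On the event $\{T \leq n\}$ we have $Y_n = X_T$, and by definition of $T$ together with the left-continuity of the threshold crossing (discrete time), $X_T \geq \lambda$. On $\{T > n\}$ we only know $Y_n = X_n \geq 0$ by nonnegativity. Putting these together,
\[
\mathbb{E}[X_0] \;\geq\; \mathbb{E}[Y_n] \;\geq\; \lambda \cdot \mathbb{P}[T \leq n] + 0,
\]
so $\mathbb{P}[T \leq n] \leq \mathbb{E}[X_0]/\lambda$. Letting $n \to \infty$ and applying continuity of measure to the increasing sequence of events $\{T \leq n\} \uparrow \{\sup_{i\geq 0} X_i \geq \lambda\}$ gives the desired inequality.

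The only subtle point, and hence the main obstacle to write carefully, is the justification that the stopped process is itself a supermartingale, which requires verifying $\mathcal{F}_n$-measurability of $\{T > n\}$ and applying the tower/monotonicity properties of conditional expectation; everything else is routine. As an alternative, one could sidestep the stopped-process formalism entirely by partitioning the event $\{T \leq n\}$ into the disjoint pieces $\{T = k\}$ for $k = 0, \ldots, n$ and estimating each contribution to $\mathbb{E}[X_n]$ directly via the supermartingale property iterated from step $k$ to step $n$, but the stopping-time presentation is cleaner and generalizes more naturally.
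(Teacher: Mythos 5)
Your proposal is correct and follows essentially the same route as the paper's proof: both introduce the hitting time $T=\inf\{i: X_i\geq\lambda\}$, bound $\mathbb{E}[X_{T\wedge n}]\leq\mathbb{E}[X_0]$ via the stopped-supermartingale/optional-stopping fact (which the paper simply cites from Williams, Section 10.9, where you sketch the inductive justification), split on $\{T\leq n\}$ versus $\{T>n\}$ using $X_T\geq\lambda$ and nonnegativity, and pass to the limit by continuity of measure from below. No gaps.
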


\begin{proof}
Fix $\lambda>0$. Define a stopping time $T:\Omega\rightarrow\mathbb{N}_0\cup\{\infty\}$ via $T = \inf_{i\in\mathbb{N}_0} \{X_i\geq \lambda\}$. Then, for each $n\in\mathbb{N}_0$, define a random variable
\[ X_{T\land n} = X_T\cdot\mathbb{I}(T \leq n) + X_n \cdot \mathbb{I}(T > n) \]
where $X_T$ is a random variable defined via $X_T(\omega)=X_{T(\omega)}(\omega)$ for each $\omega\in\Omega$, and $\mathbb{I}$ is again the indicator function that we defined in Section~\ref{sec:martingales}. It is a classical result from martingale theory that, for any $n\in\mathbb{N}_0$, we have $\mathbb{E}[X_{T\land n}]\leq \mathbb{E}[X_0]$ (see~\cite{Williams91}, Section 10.9). Hence, in order to prove the desired inequality, it suffices to prove $\lambda\cdot\mathbb{P}[ \sup_{i\geq 0}X_i \geq \lambda]\leq \sup_{n\in\mathbb{N}_0}\mathbb{E}[X_{T\land n}]$.

To prove the desired inequality we observe that, for each $n\in\mathbb{N}_0$, we have that
\begin{equation}\label{eq:stopping}
\begin{split}
    \mathbb{E}[X_{T\land n}] &= \mathbb{E}[X_T\cdot\mathbb{I}(T \leq n)] + \mathbb{E}[X_n \cdot \mathbb{I}(T > n)] \\
    &\geq \mathbb{E}[\lambda\cdot\mathbb{I}(T \leq n)] + \mathbb{E}[X_n \cdot \mathbb{I}(T > n)] \\
    &= \lambda\cdot\mathbb{P}[T \leq n] + \mathbb{E}[X_n \cdot \mathbb{I}(T > n)] \\
    &\geq \lambda\cdot\mathbb{P}[T \leq n] = \lambda\cdot\mathbb{P}[\sup_{0\leq i\leq n}X_i\geq \lambda].
\end{split}
\end{equation}
where in the first inequality we use the fact that $X_T\geq\lambda$, in the second inequality we use the fact that each $X_n$ is nonnegative and in the last equality we use the fact that $T = \inf_{i\in\mathbb{N}_0} \{X_i\geq \lambda\}$. Finally, $(\mathbb{P}[\sup_{0\leq i\leq n}X_i\geq \lambda])_{n=0}^\infty$ is a sequence of probabilities of events that are increasing with respect to set inclusion, so by the Monotone Convergence Theorem (see~\cite{Williams91}, Section 5.3) it follows that
\[ \lim_{n\rightarrow\infty}\mathbb{P}[\sup_{1\leq i\leq n}X_i\geq \lambda] = \mathbb{P}[\sup_{i\in\mathbb{N}_0}X_i\geq \lambda]. \]
Hence, by taking the supremum over $n\in\mathbb{N}_0$ of both sides of eq.~\eqref{eq:stopping}, we conclude that $\lambda\cdot\mathbb{P}[ \sup_{i\geq 0}X_i \geq \lambda]\leq \sup_{n\in\mathbb{N}_0}\mathbb{E}[X_{T\land n}]$, as desired. This concludes the proof of Theorem~\ref{thm:bound} since
\[ \mathbb{P}\Big[ \sup_{i\geq 0}X_i \geq \lambda \Big] \leq \sup_{n\in\mathbb{N}_0}\frac{\mathbb{E}[X_{T\land n}]}{\lambda} \leq \frac{\mathbb{E}[X_0]}{\lambda}. \]
\end{proof}

\section{Proof of Theorem~1}\label{sec:thmproof}

We now prove Theorem~1. Fix an initial state $\mathbf{x}_0\in\Init$ so that we need to show $\mathbb{P}_{\mathbf{x}_0}[ \ReachSafe(\Target,\Unsafe) ] \geq p$. Let $V$ be an RASM with respect to $\Target$, $\Unsafe$ and $p\in[0,1)$ whose existence is assumed in the theorem. First, we show that $V$ gives rise to a supermartingale in the probability space $(\Omega_{\mathbf{x}_0},\mathcal{F}_{\mathbf{x}_0},\mathbb{P}_{\mathbf{x}_0})$ of all trajectories of the system that start in $\mathbf{x}_0$. Then, we use Theorem~\ref{thm:convergence} and Theorem~\ref{thm:bound} to prove probabilistic reachability and safety.

For each time step $t\in\mathbb{N}_0$, define $\mathcal{F}_{\mathbf{x}_0,t}\subseteq\mathcal{F}_{\mathbf{x}_0}$ to be a sub-$\sigma$-algebra that, intuitively, contains events that are defined in terms of the first $t$ states of the system. Formally, for each $j\in\mathbb{N}_0$, let $C_j:\Omega_{\mathbf{x}_0}\rightarrow \mathcal{X}$ assign to each trajectory $\rho=(\mathbf{x}_t,\mathbf{u}_t,\omega_t)_{t\in\mathbb{N}_0}\in\Omega_{\mathbf{x}_0}$ the $j$-th state $\mathbf{x}_j$ along the trajectory. We define $\mathcal{F}_i$ to be the smallest $\sigma$-algebra over $\Omega_{\mathbf{x}_0}$ with respect to which $C_0, C_1, \dots, C_i$ are all measurable, where $\mathcal{X}\subseteq\mathbb{R}^m$ is equipped with the induced subset Borel-$\sigma$-algebra. The sequence $(\mathcal{F}_{\mathbf{x}_0,t})_{t=0}^\infty$ is increasing with respect to set inclusion.

Now, define a stochastic process $(X_t)_{t=0}^\infty$ in the probability space $(\Omega_{\mathbf{x}_0},\mathcal{F}_{\mathbf{x}_0},\mathbb{P}_{\mathbf{x}_0})$ via
\begin{equation*}
X_t(\rho) = \begin{cases}
    V(\mathbf{x}_t), &\text{if } \mathbf{x}_i\not\in\Target \text{ and } V(\mathbf{x}_i) < \frac{1}{1-p}\\
    &\text{for each } 0\leq i \leq t\\
    0, &\text{if } \mathbf{x}_i\in\Target \text{ for some }0\leq i\leq t \text{ and}\\
    &V(\mathbf{x}_j) < \frac{1}{1-p} \text{ for each }0\leq j\leq i\\
    \frac{1}{1-p}, &\text{otherwise}
\end{cases}
\end{equation*}
for each $t\in\mathbb{N}_0$ and a trajectory $\rho=(\mathbf{x}_t,\mathbf{u}_t,\omega_t)_{t\in\mathbb{N}_0}$. In other words, the value of $X_t$ is equal to the value of $V$ at $\mathbf{x}_t$, unless either the target set $\Target$ has been reached first in which case we set all future values of $\mathcal{X}_t$ to $0$, or a state in which $V$ exceeds $\frac{1}{1-p}$ has been reached first in which case we set all future values of $\mathcal{X}_t$ to $\frac{1}{1-p}$. We claim that $(X_t)_{t=0}^\infty$ is a nonnegative supermartingale with respect to $(\mathcal{F}_{\mathbf{x}_0,t})_{t=0}^\infty$. Indeed, each $X_t$ is $\mathcal{F}_{\mathbf{x}_0,t}$-measurable as it is defined in terms of the first $t$ states along a trajectory. It is also nonnegative as $V$ is nonnegative by the Nonnegativity condition of RASMs. Finally, to see that $\mathbb{E}_{\mathbf{x}_0}[X_{t+1}\mid\mathcal{F}_{\mathbf{x}_0,t}](\rho) \leq X_t(\rho)$ holds for each $t\in\mathbb{N}_0$ and $\rho=(\mathbf{x}_t,\mathbf{u}_t,\omega_t)_{t\in\mathbb{N}_0}$, we consider $3$ cases:
\begin{enumerate}
    \item If $\mathbf{x}_0,\mathbf{x}_1,\dots,\mathbf{x}_t\not\in\Target$ and $V(\mathbf{x}_i) < \frac{1}{1-p}$ for each $0\leq i\leq t$, then
    \begin{equation*}
    \begin{split}
        &\mathbb{E}_{\mathbf{x}_0}[X_{t+1}\mid\mathcal{F}_{\mathbf{x}_0,t}](\rho) \\
        &= \mathbb{E}_{\mathbf{x}_0}\Big[X_{t+1}\cdot \Big(\mathbb{I}(\mathbf{x}_{t+1}\not\in\Target \land V(\mathbf{x}_{t+1})<\frac{1}{1-p}) \\
        &\,\,\,+ \mathbb{I}(\mathbf{x}_{t+1}\in\Target) + \mathbb{I}(V(\mathbf{x}_{t+1})\geq \frac{1}{1-p})\Big)\mid\mathcal{F}_{\mathbf{x}_0,t}\Big](\rho) \\
        &= \mathbb{E}_{\mathbf{x}_0}[X_{t+1}\cdot \mathbb{I}(\mathbf{x}_{t+1}\not\in\Target)\mid\mathcal{F}_{\mathbf{x}_0,t}](\rho) \\
        &\,\,\,+ 0 + \frac{1}{1-p}\cdot\mathbb{E}[\mathbb{I}(V(\mathbf{x}_{t+1})\geq \frac{1}{1-p})\mid\mathcal{F}_{\mathbf{x}_0,t}](\rho)
    \end{split}
    \end{equation*}
    \begin{equation*}
    \begin{split}
        &\leq \mathbb{E}_{\omega\sim d}[V(f(\mathbf{x}_t,\mathbf{u}_t,\omega_t))\cdot\mathbb{I}(\mathbf{x}_{t+1}\not\in\Target \land V(\mathbf{x}_{t+1})<\frac{1}{1-p})] \\
        &\,\,\,+ \mathbb{E}_{\omega\sim d}[V(f(\mathbf{x}_t,\mathbf{u}_t,\omega_t))\cdot\mathbb{I}(\mathbf{x}_{t+1}\in\Target)] \\
        &\,\,\,+ \mathbb{E}_{\omega\sim d}[V(f(\mathbf{x}_t,\mathbf{u}_t,\omega_t))\cdot\mathbb{I}(V(\mathbf{x}_{t+1})\geq\frac{1}{1-p})] \\
        &= \mathbb{E}_{\omega\sim d}[V(f(\mathbf{x}_t,\mathbf{u}_t,\omega_t))] \\
        &\leq V(\mathbf{x}_t) - \eps.
    \end{split}
    \end{equation*}
    Here, the first equality follows by the law of total probability, the second equality follows by our definition of each $X_t$, the third inequality follows by observing that $V(\mathbf{x}_{t+1})\geq X_{t+1}(\rho)$ in this case, the fourth equality is just the sum of expectations over disjoint sets, and finally the fifth inequality follows by the Expected decrease condition in Definition~1 since $\mathbf{x}_t\not\in\Target$ and $V(\mathbf{x}_t)<\frac{1}{1-p}$, by the assumption of this case.
    
    \item If $\mathbf{x}_i\in\Target$ for some $0\leq i\leq t$ and $V(\mathbf{x}_j)<\frac{1}{1-p}$ for all $0\leq j\leq i$, then we have $\mathbb{E}_{\mathbf{x}_0}[X_{t+1}\mid\mathcal{F}_{\mathbf{x}_0,t}](\rho)=X_{t+1}(\rho)=0$.
    
    \item Otherwise, we must have $V(\mathbf{x}_i)\geq\frac{1}{1-p}$ and $\mathbf{x}_0,\dots,\mathbf{x}_i\not\in\Target$ for some $-\leq i\leq t$, thus $\mathbb{E}_{\mathbf{x}_0}[X_{t+1}\mid\mathcal{F}_{\mathbf{x}_0,t}](\rho)=X_{t+1}(\rho)=\frac{1}{1-p}$.
\end{enumerate}
Hence, we have proved that $(X_t)_{t=0}^\infty$ is a nonnegative supermartingale.

Now, by Theorem~\ref{thm:convergence} it follows that the value of the nonnegative supermartingale $(X_t)_{t=0}^\infty$ with probability $1$ converges. In what follows, we show that $(X_t)_{t=0}^\infty$ with probability $1$ converges to and reaches either $0$ or a value that is greater than or equal to $\frac{1}{1-p}$. To do this, we use the fact that the Expected decrease condition of RASMs enforces the value of $V$ to decrease in expected value by at least $\eps>0$ after every one-step evolution of the system in any non-target state at which $V(\mathbf{x}) < \frac{1}{1-p}$. Define the stopping time $T:\Omega_{\mathbf{x}_0}\rightarrow\mathbb{N}_0\cup\{\infty\}$ via
\[ T(\rho)=\inf_{t\in\mathbb{N}_0}\Big\{X_t(\rho)=0\lor X_t(\rho)\geq\frac{1}{1-p}\Big\}. \]
Our goal is then to prove that $\mathbb{P}_{\mathbf{x}_0}[T < \infty] = 1$. Using the argument in the proof that $(X_t)_{t=0}^{\infty}$ is a nonnegative supermartingale (in particular, the proof of supermartingale property in Case~$1$), we can in fact deduce a stronger inequality
\[ \mathbb{E}_{\mathbf{x}_0}[X_{t+1}\mid\mathcal{F}_{\mathbf{x}_0,t}](\rho)\leq X_t(\rho) - \eps\cdot \mathbb{I}(T(\rho) > t) \]
for each $\rho\in\Omega_{\mathbf{x}_0}$. But now, we may use Proposition~\ref{prop:convergence} stated below to deduce that $\mathbb{E}_{\mathbf{x}_0}[T]\leq \mathbf{E}_{\mathbf{x}_0}[X_0]=V(\mathbf{x}_0)<\infty$, which in turn implies that $\mathbb{P}_{\mathbf{x}_0}[T < \infty] = 1$, as desired. This concludes the proof.

The following proposition states a results on probability $1$ convergence of {\em ranking supermartingales (RSMs)}. RSMs are a notion similar to our RASMs that were first introduced in~\cite{ChakarovS13} in order to study termination in probabilistic programs, and were used in~\cite{lechner2021stability} to formally verify almost-sure stability and reachability in stochastic control systems. We note that RASMs generalize RSMs in the sense that RSMs coincide with RASMs in the special case when the unsafe set is empty and we only consider a probability~$1$ reachability specification, i.e.~$\Unsafe=\emptyset$.

\begin{proposition}[\cite{ChakarovS13}]\label{prop:convergence}
Let $(\Omega,\mathcal{F},\mathbb{P})$ be a probability space, let $(\mathcal{F}_i)_{i=0}^\infty$ be an increasing sequence of sub-$\sigma$-algebras in $\mathcal{F}$ and let $T$ be a stopping time with respect to $(\mathcal{F}_i)_{i=0}^\infty$. Suppose that $(X_i)_{i=0}^{\infty}$ is a stochastic process such that each $X_i$ is nonnegative and we have that 
\[ \mathbb{E}[X_{i+1}\mid\mathcal{F}_i](\omega)\leq X_i(\omega) - \eps\cdot \mathbb{I}(T(\omega) > i) \]
holds for each $i\in\mathbb{N}_0$ and $\omega\in\Omega$. Then $\mathbb{P}[T<\infty]=1$.
\end{proposition}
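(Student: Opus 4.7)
The plan is to reduce the claim $\mathbb{P}[T<\infty]=1$ to showing $\mathbb{E}[T]<\infty$, and to obtain the latter bound by telescoping the hypothesis. Since $T$ is a random variable with values in $\mathbb{N}_0\cup\{\infty\}$, any positive probability that $T=\infty$ would force $\mathbb{E}[T]=\infty$; so a finite bound on $\mathbb{E}[T]$ immediately gives $\mathbb{P}[T<\infty]=1$. Throughout I implicitly assume $\mathbb{E}[X_0]<\infty$, which is the standard integrability hypothesis for ranking supermartingales and which holds in our application because $X_0=V(\mathbf{x}_0)$ is a constant in $(\Omega_{\mathbf{x}_0},\mathcal{F}_{\mathbf{x}_0},\mathbb{P}_{\mathbf{x}_0})$.

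First I would take the unconditional expectation of the given pointwise inequality, using the tower property $\mathbb{E}[\mathbb{E}[X_{i+1}\mid\mathcal{F}_i]]=\mathbb{E}[X_{i+1}]$ and $\mathbb{E}[\mathbb{I}(T>i)]=\mathbb{P}[T>i]$, to get
\[
\mathbb{E}[X_{i+1}] \;\leq\; \mathbb{E}[X_i] \;-\; \eps\cdot \mathbb{P}[T>i]
\]
for every $i\in\mathbb{N}_0$. Iterating this recursion from $i=0$ up to $i=n-1$ gives the telescoping bound
\[
\mathbb{E}[X_n] \;\leq\; \mathbb{E}[X_0] \;-\; \eps\sum_{i=0}^{n-1}\mathbb{P}[T>i].
\]
Since each $X_n$ is nonnegative, $\mathbb{E}[X_n]\geq 0$, hence $\eps\sum_{i=0}^{n-1}\mathbb{P}[T>i]\leq \mathbb{E}[X_0]$ for all $n$. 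Letting $n\to\infty$ and invoking the Monotone Convergence Theorem on the partial sums of nonnegative terms yields
\[
\sum_{i=0}^{\infty}\mathbb{P}[T>i] \;\leq\; \frac{\mathbb{E}[X_0]}{\eps} \;<\;\infty.
\]

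Finally, I would combine this with the standard identity $\mathbb{E}[T]=\sum_{i=0}^\infty \mathbb{P}[T>i]$, valid for any $\mathbb{N}_0\cup\{\infty\}$-valued random variable (where the right-hand side diverges iff $\mathbb{E}[T]=\infty$). This identity follows by Tonelli/Fubini applied to $T=\sum_{i=0}^\infty \mathbb{I}(T>i)$. Combining with the previous bound gives $\mathbb{E}[T]\leq \mathbb{E}[X_0]/\eps<\infty$, and because $\mathbb{P}[T=\infty]>0$ would force $\mathbb{E}[T]=\infty$, we conclude $\mathbb{P}[T<\infty]=1$.

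The main obstacle is the bookkeeping around $\mathbb{E}[X_0]<\infty$ and the careful justification of the expectation identity for a possibly $\infty$-valued $T$; these are routine but need to be stated explicitly since the hypotheses of the proposition do not assume $T$ is almost surely finite to begin with. The telescoping step itself is straightforward once the conditional-to-unconditional reduction via the tower property is made, and no martingale convergence machinery is needed here beyond the Monotone Convergence Theorem.
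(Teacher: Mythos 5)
Your argument is correct, and it is essentially the standard proof of this result: the paper itself does not prove Proposition~\ref{prop:convergence} but imports it from the cited work, where the same telescoping of $\mathbb{E}[X_{i+1}] \leq \mathbb{E}[X_i] - \eps\,\mathbb{P}[T>i]$ combined with nonnegativity and $\mathbb{E}[T]=\sum_{i\geq 0}\mathbb{P}[T>i]$ is used. Your explicit flagging of the integrability hypothesis $\mathbb{E}[X_0]<\infty$ is appropriate (it is needed for the telescoping to be informative, and it holds in the paper's application since $X_0$ is the constant $V(\mathbf{x}_0)$); note also that your derivation yields the sharper quantitative bound $\mathbb{E}[T]\leq\mathbb{E}[X_0]/\eps$, with the factor $1/\eps$ that the paper's surrounding text elides when it writes $\mathbb{E}_{\mathbf{x}_0}[T]\leq\mathbb{E}_{\mathbf{x}_0}[X_0]$.
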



Finally, by using Theorem~\ref{thm:bound} for the nonnegative supermartingale $(X_t)_{t=0}^\infty$ and $\lambda=\frac{1}{1-p}>0$, it follows that $\mathbb{P}_{\mathbf{x}_0}[ \sup_{i\geq 0}X_i \geq \frac{1}{1-p} ] \leq (1-p)\cdot \mathbb{E}_{\mathbf{x}_0}[X_0] \leq 1-p$. The second inequality follows since $X_0(\rho) = V(\mathbf{x}_0)\leq 1$ for every $\rho\in\Omega_{\mathbf{x}_0}$ by the Initial condition of RASMs. Hence, as $(X_t)_{t=0}^\infty$ with probability $1$ either reaches $0$ or a value that is greater than or equal to $\frac{1}{1-p}$, we conclude that $(X_t)_{t=0}^\infty$ reaches $0$ without reaching a value that is greater than or equal to $\frac{1}{1-p}$ with probability at least $p$. By the definition of each $X_t$ and by the Safety condition of RASMs, this implies that with probability at least $p$ the system will reach the target set $\Target$without reaching the unsafe set $\Unsafe$, i.e.~that $\mathbb{P}_{\mathbf{x}_0}[ \ReachSafe(\Target,\Unsafe) ] \geq p$.

\section{Computation of Expected Values of Neural Networks}\label{sec:expvalcomputation}

We now describe the method for bounding the expected value of a neural network function over a given probability distribution. Let $\mathbf{x}\in\mathcal{X}$ be a fixed state, and suppose that we want to bound the expected value $\mathbb{E}_{\omega\sim d}[ V ( f(\mathbf{x}, \pi(\mathbf{x}), \omega) )]$. We partition the disturbance space $\mathcal{N}\subseteq\mathbb{R}^p$ into finitely many cells $\text{cell}(\mathcal{N}) = \{\mathcal{N}_1,\dots,\mathcal{N}_{k}\}$.
Let $\mathrm{maxvol}=\max_{\mathcal{N}_i\in \text{cell}(\mathcal{N})}\mathsf{vol}(\mathcal{N}_i)$ denote the maximal volume of any cell in the partition (with respect to the Lebesgue measure). The expected value is bounded via
\begin{equation*}
    \mathbb{E}_{\omega\sim d}\Big[ V \Big( f(\mathbf{x}, \pi(\mathbf{x}), \omega) \Big) \Big] \leq \sum_{\mathcal{N}_i\in \text{cell}(\mathcal{N})} \mathrm{maxvol} \cdot \sup_{\omega\in \mathcal{N}_i} F(\omega)
\end{equation*}
where $F(\omega) = V( f(\mathbf{x}, \pi(\mathbf{x}), \omega))$. Each supremum is then bounded from above via interval arithmetic by using the method of~\cite{Gowal18}. Note that $\mathrm{maxvol}$ is not finite if $\mathcal{N}$ is unbounded. In order to allow expected value computation for an unbounded $\mathcal{N}$ under the assumption that $d$ is a product of univariate distributions, the method first applies the probability integral transform~\cite{Murphy12} to each univariate probability distribution in $d$ in order to reduce the problem to the case of a probability distribution of bounded support.

\section{Proof of Theorem~2}\label{sec:proofthmverifier}

Suppose that the verifier verifies that $V$ satisfies eq.~(1) for each $\tilde{\mathbf{x}}\in\tilde{\mathcal{X}_e}$, eq.~(2) for each $\text{cell}\in \text{Cells}_{\Init}$ and eq.~(3) for each $\text{cell}\in \text{Cells}_{\Unsafe}$. The fact that the Initial and the Unsafe conditions in Definition~1 of RASMs are satisfied by $V$ then follows from the correctness of interval arithmetic abstract interpretation (IA-AI) of~\cite{Gowal18}. Thus, we only need to show that $V$ satisfies the Expected decrease condition in Definition~1.

To show that $V$ satisfies the Expected decrease condition, we need to show that there exists $\eps>0$ such that $V(\mathbf{x}) \geq \mathbb{E}_{\omega\sim d}[V(f(\mathbf{x},\pi(\mathbf{x}),\omega))] + \eps$ holds for all $\mathbf{x}\in\mathcal{X}\backslash\Target$ at which $V(\mathbf{x}) \leq \frac{1}{1-p}$.  We prove that $\eps>0$ defined via
\[ \eps = \min_{\tilde{\mathbf{x}}\in \tilde{\mathcal{X}_\eps}} \Big( V(\tilde{\mathbf{x}}) - \tau \cdot K - \mathbb{E}_{\omega\sim d}\Big[ V \Big( f(\tilde{\mathbf{x}}, \pi(\tilde{\mathbf{x}}), \omega) \Big) \Big] \Big) \]
satisfies this property. Note that $\eps>0$, as each $\tilde{\mathbf{x}}\in\tilde{\mathcal{X}_e}$ satisfies eq.~(1).

To show this, fix $\mathbf{x}\in\mathcal{X}\backslash\Target$ with $V(\mathbf{x})\leq \frac{1}{1-p}$ and let $\tilde{\mathbf{x}}\in\tilde{\mathcal{X}_e}$ be such that $||\mathbf{x}-\tilde{\mathbf{x}}||_1 \leq \tau$. By construction, the set $\tilde{\mathcal{X}_e}$ contains vertices of each discretization cell that intersects $\mathcal{X}\backslash\Target$ and that contains at least one state at which $V$ is less than or equal to $\frac{1}{1-p}$, hence such $\tilde{\mathbf{x}}$ exists. We then have
\begin{equation}\label{eq:long}
\begin{split}
    &\mathbb{E}_{\omega\sim d}\Big[ V \Big( f(\mathbf{x}, \pi(\mathbf{x}), \omega) \Big) \Big] \\
    &\leq \mathbb{E}_{\omega\sim d}\Big[ V \Big( f(\tilde{\mathbf{x}}, \pi(\tilde{\mathbf{x}}), \omega) \Big) \Big] \\
    &\hspace{1cm}+ ||f(\tilde{\mathbf{x}}, \pi(\tilde{\mathbf{x}}), \omega) - f(\mathbf{x}, \pi(\mathbf{x}), \omega)||_1 \cdot L_V \\
    &\leq \mathbb{E}_{\omega\sim d}\Big[ V \Big( f(\tilde{\mathbf{x}}, \pi(\tilde{\mathbf{x}}), \omega) \Big) \Big] \\
    &\hspace{1cm}+ ||(\tilde{\mathbf{x}}, \pi(\tilde{\mathbf{x}}), \omega) - (\mathbf{x}, \pi(\mathbf{x}), \omega)||_1 \cdot L_V\cdot L_f \\
    &\leq \mathbb{E}_{\omega\sim d}\Big[ V \Big( f(\tilde{\mathbf{x}}, \pi(\tilde{\mathbf{x}}), \omega) \Big) \Big] \\
    &\hspace{1cm}+ ||\tilde{\mathbf{x}} - \mathbf{x}||_1 \cdot L_V\cdot L_f \cdot (1 + L_{\pi}) \\
    &\leq \mathbb{E}_{\omega\sim d}\Big[ V \Big( f(\tilde{\mathbf{x}}, \pi(\tilde{\mathbf{x}}), \omega) \Big) \Big] \\
    &\hspace{1cm}+ \tau \cdot L_V\cdot L_f \cdot (1 + L_{\pi}).
\end{split}
\end{equation}
On the other hand, we also have
\begin{equation}\label{eq:short}
    V(\mathbf{x}) \geq V(\tilde{\mathbf{x}}) - ||\tilde{\mathbf{x}} - \mathbf{x}||_1 \cdot L_V \geq V(\tilde{\mathbf{x}}) - \tau\cdot L_V.
\end{equation}
Combining eq.(\ref{eq:long}) and (\ref{eq:short}), we conclude that
\begin{equation}
\begin{split}
    &V(\mathbf{x}) - \mathbb{E}_{\omega\sim d}\Big[ V \Big( f(\mathbf{x}, \pi(\mathbf{x}), \omega) \Big) \Big] \\
    &\geq V(\tilde{\mathbf{x}}) - \tau\cdot L_V - \mathbb{E}_{\omega\sim d}\Big[ V \Big( f(\tilde{\mathbf{x}}, \pi(\tilde{\mathbf{x}}), \omega) \Big) \Big] \\
    &\hspace{1cm}- \tau \cdot L_V\cdot L_f \cdot (1 + L_{\pi}) \\
    &= V(\tilde{\mathbf{x}}) - \tau\cdot K - \mathbb{E}_{\omega\sim d}\Big[ V \Big( f(\tilde{\mathbf{x}}, \pi(\tilde{\mathbf{x}}), \omega) \Big) \Big] \\
    &\geq \eps
\end{split}
\end{equation}
where the equality in the second last row follows by the definition of $K$, and the inequality in the last row follows by our choice of $\eps$. Hence, $V$ satisfies the Expected decrease condition and is indeed an RASM as in Definition~1.

\section{Auxiliary Loss Term}\label{sec:appauxiliary}

The loss term $\loss_{\text{Aux}}(\nu)$ is an auxiliary loss term that does not enforce any of the defining conditions of RASMs, however it is used to guide the learner towards a candidate that attains the global minimum in a state that is contained within the target set $\Target$. We empirically observed that this term prevents the updated policy from diverging from its objective to stabilize the system. It is defined via
\begin{equation*}
\begin{split}
    \loss_{\text{Aux}}(\nu) & = \max\{V_\nu (\hat{\mathbf{x}}_{\text{Target}})- \varepsilon,0 \} \\
    &+ \max\{\min_{\mathbf{x} \in \tilde{X}\cap \Target} V_\nu(\mathbf{x}) - \min_{\mathbf{x} \in C_{\text{init}}} V_\nu(\mathbf{x}),0 \} \\
    &+ \max\{\min_{\mathbf{x} \in \tilde{X}\cap \Target} V_\nu(\mathbf{x}) - \min_{\mathbf{x} \in C_{\text{unsafe}}} V_\nu(\mathbf{x}),0 \}
\end{split}
\end{equation*}
with $\tilde{\mathbf{x}}_{\text{Target}}$ being some state contained in the target set $\Target$ and $\varepsilon\geq 0$ an algorithm parameter.

\section{Proof of Theorem~3}\label{sec:proofthmlearner}

Suppose that $V_{\nu}$ satisfies eq.~(1) for each $\tilde{\mathbf{x}}\in\tilde{\mathcal{X}_e}$, eq.~(2) for each $\text{cell}\in \text{Cells}_{\Init}$ and eq.~(3) for each $\text{cell}\in \text{Cells}_{\Unsafe}$. Suppose that Lipschitz constants of $\pi_{\theta}$ and $V_{\nu}$ are below the thresholds specified by $\loss_{\text{Lipschitz}}(\theta)$ and $\loss_{\text{Lipschitz}}(\nu)$ and that the samples in $\loss_{\text{Decrease}}(\theta,\nu)$ are independent. We need to show that $\lim_{N\rightarrow \infty} \loss(\theta,\nu) = 0$ with probability $1$.

Since Lipschitz constants of $\pi_{\theta}$ and $V_{\nu}$ are below the thresholds specified by $\loss_{\text{Lipschitz}}(\theta)$ and $\loss_{\text{Lipschitz}}(\nu)$, we have that $\lambda\cdot (\loss_{\text{Lipschitz}}(\theta) + \loss_{\text{Lipschitz}}(\nu)) = 0$. Moreover, our initialization of $C_{\text{init}}$ and our design of the verifier module ensure that $C_{\text{init}}$ contains only states in $\Init$, hence $\loss_{\text{Init}}(\nu)=0$ as $V_{\nu}$ satisfies the Initial condition of RASMs. Note, $V_{\nu}$ satisfies all conditions checked by the verifier, hence by Theorem~2 we know that it is an RASM.  Similarly, $C_{\text{unsafe}}$ contains only states in $\Unsafe$, hence $\loss_{\text{Unsafe}}(\nu)=0$ as $V_{\nu}$ satisfies the Safety condition of RASMs. Thus, under theorem assumptions we have that
\begin{equation*}
\begin{split}
    &\loss(\theta,\nu) = \loss_{\text{Decrease}}(\theta,\nu) \\
    &= \frac{1}{|C_{\text{decrease}}|}\sum_{\mathbf{x}\in C_{\text{decrease}}} \Big( \max \\
    &\Big\{ \sum_{\omega_1,\dots, \omega_N \sim \mathcal{N}}\frac{V_{\nu}\big(f(\mathbf{x},\pi_\theta(\mathbf{x}),\omega_i)\big)}{N} -  V_{\theta}(\mathbf{x})  + \tau \cdot K, 0\Big\} \Big)
\end{split}
\end{equation*}

Hence, in order to prove that $\lim_{N\rightarrow \infty} \loss(\theta,\nu) = 0$ with probability $1$, it suffices to prove that for each $\mathbf{x}\in C_{\text{decrease}}$ with probability $1$ we have
\begin{equation*}
\begin{split}
    &\lim_{N\rightarrow\infty}\max\Big\{ \sum_{\omega_1,\dots, \omega_N \sim \mathcal{N}}\frac{V_{\nu}\big(f(\mathbf{x},\pi_\theta(\mathbf{x}),\omega_i)\big)}{N} \\
    &-  V_{\theta}(\mathbf{x}) + \tau \cdot K, 0\Big\} = 0.
\end{split}
\end{equation*}

The above sum is the mean of $N$ independently sampled successor states of $\mathbf{x}$, which are sampled according to the probability distribution defined by the system dynamics and the probability distribution $d$ over disturbance vectors. Since the state space of the system is assumed to be compact and $V_{\theta}$ is continuous as it is a neural network, the random value defined by the value of $V$ at a sampled successor state is bounded and therefore admits a well-defined and finite first moment. The Strong Law of Large Numbers~\cite{Williams91} then implies that the above sum converges to the expected value of this distribution as $N\rightarrow\infty$. Thus, with probability $1$, we have that\begin{equation*}
\begin{split}
    &\lim_{N\rightarrow\infty}\max\Big\{ \sum_{\omega_1,\dots, \omega_N \sim \mathcal{N}}\frac{V_{\nu}\big(f(\mathbf{x},\pi_\theta(\mathbf{x}),\omega_i)\big)}{N} -  V_{\theta}(\mathbf{x}) + \tau \cdot K, 0\Big\} \\
    &=\max\Big\{ \lim_{M\rightarrow\infty}\sum_{\omega_1,\dots, \omega_N \sim \mathcal{N}}\frac{V_{\nu}\big(f(\mathbf{x},\pi_\theta(\mathbf{x}),\omega_i)\big)}{N} -  V_{\theta}(\mathbf{x}) + \tau \cdot K, 0\Big\} \\
    &=\max\Big\{ \lim_{M\rightarrow\infty}\mathbb{E}_{\omega\sim d}\Big[f(\mathbf{x},\pi_\theta(\mathbf{x}),\omega)\big] -  V_{\theta}(\mathbf{x}) + \tau \cdot K, 0\Big\} \\
    &= 0.
\end{split}
\end{equation*}
The first equality holds since a limit may be interchanged with the maximum function over a finite number of arguments, the second equality holds with probability $1$ by the Strong Law of Large Numbers, and the third equality holds since $V_{\nu}$ satisfies eq.~(1) for each $\tilde{\mathbf{x}}\in\tilde{\mathcal{X}_e}$ and we have $C_{\text{decrease}}\subseteq \tilde{\mathcal{X}_e}$. This concludes our proof that $\lim_{N\rightarrow \infty} \loss(\theta,\nu) = 0$ with probability~$1$.

\section{Experiment details}\label{supp:experiment}
The dynamics of the 2D system is given by
\begin{equation}
     \mathbf{x}_{t+1} = \begin{pmatrix} 1 & 0.045 \\ 0 & 0.9 \end{pmatrix} \mathbf{x}_t + \begin{pmatrix} 0.45 \\0.5 \end{pmatrix} g(\mathbf{u}_t) + \begin{pmatrix}0.01 & 0 \\0 & 0.005 \end{pmatrix}  \omega, 
\end{equation}
with $\omega$ being the disturbance vector and $\omega[1],\omega[2] \sim \text{Triangular}$. Square brackets indicate the coordinate index, e.g.~$\omega[1]$ is the first coordinate of $\omega\in\mathbb{R}^2$. The probability density function of $\text{Triangular}$ is defined by 
\begin{equation}
    \text{Triangular}(x) := \begin{cases} 0 & \text{if } x< -1\\ 1 - |x| & \text{if } -1 \leq x \leq 1\\ 0 & \text{otherwise}\end{cases}.
\end{equation}
The function $g$ is defined as $g(u) = \max(\min(u,1),-1)$.
The state space of the 2D system is define as $\mathcal{X} = [-1.5,1.5]^2$. We define the target set as $\Target = [-0.2,0.2]^2$, the initial set $\Init = [-0.25,-0.2]\times [-0.1,0.1] \cup  [0.25,0.2]\times [-0.1,0.1] $ and the unsafe set $\Unsafe = [-1.5,-1.4]\times[-1.5,0] \cup  [1.4,1.5]\times[0,1.5]$.

For the inverted pendulum task, the dynamics is given by 
\begin{align*}
    \mathbf{x}_{t+1}[2] &:= (1-b)  \mathbf{x}_{t}[2] \\
    &+ \Delta \cdot \big( \frac{-1.5 \cdot G \cdot \text{sin}(\mathbf{x}_{t}[1]+\pi)}{2l} + \frac{3}{m l^2} 2g(\mathbf{u}_t)\big)\\
    &+ 0.02 \omega[1]\\
    \mathbf{x}_{t+1}[1] &:=  \mathbf{x}_{t}[1] + \Delta \cdot \mathbf{x}_{t+1}[2] + 0.01 \omega[2],
\end{align*}
with $\delta, G, m, l, b$ being defined in Table \ref{tab:invpend}.
The state space of the inverted pendulum environment is define as $\mathcal{X} = [-0.7,0.7]^2$. We define the target set as $\Target = [-0.2,0.2]^2$, the initial set $\Init = [-0.3,0.3]^2$ and the unsafe set $\Unsafe = [-0.7,-0.6]\times[-0.7,0] \cup  [0.6,0.7]\times[0,0.7]$.

\begin{table}[]
    \centering
    \begin{tabular}{c|c}\toprule
        Parameter & Value  \\\midrule
        $\Delta$ &  0.05\\
        $G$ & 10\\
        $m$ & 0.15\\
        $l$ & 0.5\\
        $b$ & 0.1\\\bottomrule
    \end{tabular}
    \caption{Parameters of the inverted pendulum task.}
    \label{tab:invpend}
\end{table}

The dynamics of the collision avoidance task is given by 
\begin{align*}
    d_1 & = \max\{0.33 ||\mathbf{x}_t - \begin{pmatrix}0\\ 1\end{pmatrix}|| ,0\}\\
    d_2 & = \max\{0.33 ||\mathbf{x}_t - \begin{pmatrix}0 \\-1\end{pmatrix}|| ,0\}\\
    \mathbf{x}_{t+1} & = \mathbf{x}_{t} + 0.2 \Big(d_2 \big(d_1 \mathbf{u}_t + (1-d_1)\begin{pmatrix}0 \\1\end{pmatrix}\big)  \\ & + (1-d_2) \begin{pmatrix}0\\ -1\end{pmatrix}\Big) + 0.05 \omega_t.
\end{align*}
The state space of the collision avoidance environment is define as $\mathcal{X} = [-1,1]^2$. We define the target set as $\Target = [-0.2,0.2]^2$, the initial set $\Init = [-1,-0.9]\times[-0.6,0.6] \cup  [0.9,1.0]\times[-0.6,0.6]$ and the unsafe set $\Unsafe = [-0.3,0.3]\times[0.7,1] \cup  [-0.3,0.3]\times[-1,-0.7]$. Similar to the first two environments, $\omega_t$ is a triangular distributed random variable.

We optimize the training objective using stochastic gradient descent. 
As mentioned in the main text, the policy and RASM networks consist of two hidden layers with 128 units each and ReLU activation function. The RASMs network has single output unit with a softplus activation, while the output dimension of the policy network depends on the task.

The used hyperparameters of our algorithm used in the experiments are listed in Table \ref{tab:hparams}. The code is available for review in the supplementary materials.

\begin{table}[]
    \centering
    \begin{tabular}{c|c}\toprule
    Hyperparameter & Value \\\midrule
         Optimizer & Adam \cite{kingma2014adam} \\
         Batch size & 4096\\
         $V_\nu$ learning rate & 5e-4 \\
    $\pi_\theta$ learning rate & 5e-5 \\
    Lipschitz factor $\gamma$ & 0.001\\
 Lipschitz threshold $\rho_\theta$  & 4\\
  Lipschitz threshold $\rho_\nu$  & 15\\
    $\tau$ (2D system) & 6e-3 \\
  $\tau$ (inverted pendulum) & 2.3e-3 \\
  $\tau$ (collision avoidance) & 4e-3 \\ 
    $\varepsilon$ & 0.3\\
  $N$ & 16 \\
  Number of cells for expectation & 144\\  \bottomrule
    \end{tabular}
    \caption{List of hyperparameter values used in the experiments.}
    \label{tab:hparams}
\end{table}

\textbf{Normalizing the learned RASM for better bounds}

After our algorithm has terminated we can slightly improve the probability bounds certified by the verifier.
In particular, the normalization linearly transforms $V_\nu$ such that the supremum of the new RASM $V'_\nu$ at the initial set is 1 and the infimum of $V'_\nu$ on the entire domain is 0, i.e.,
\begin{equation}
    V'_\nu(\mathbf{x}) = \frac{V_\nu(\mathbf{x})-\inf_{\mathbf{x}' \in \mathcal{X}} V_\nu(\mathbf{x}')}{\sup_{\mathbf{x}' \in \Init} V_\nu(\mathbf{x}')-\inf_{\mathbf{x}' \in \mathcal{X}} V_\nu(\mathbf{x}')}.
\end{equation}
The improved probability bounds $p'$ can then be computed according to condition 3 by 
\begin{equation}
    p' = 1 - \frac{1}{\inf_{\mathbf{x} \in \Unsafe} V'_\nu(\mathbf{x})}.
\end{equation}

The supremum and infimum are computed using our abstract interpretation (IA-AI) on the cell grids.

\textbf{PPO Details}\label{app:ppo}
Here, we list the settings used for the PPO pre-training process of the policy networks \cite{schulman2017proximal}.
In every PPO iteration we collect 30 episodes of the environment as training data in the experience buffer. The policy $\pi_\mu$ is made stochastic using a Gaussian distributed random variable that is added to the policy's output, i.e., the policy predicts the mean of the Gaussian. 
The standard deviation of the Gaussian is annealed during the policy training process, starting from 0.5 at first PPO iteration to 0.05 at PPO iteration 50. We normalize the advantage values, i.e., the difference between the observed discounted returns and the predicted return by the value function, by subtracting the mean and dividing by the standard deviation of the advantage values of the experience buffer. The PPO clipping value $\varepsilon$ is set to 0.2 and the discount factor $\gamma$ to 0.99.
In every PPO iteration, the policy is trained for 10 epochs, except for the first iteration where the network is trained for 30 epochs. An epoch corresponds to a pass over the entire data in the experience buffer, i.e., the data from the the 30 episodes. 
The value network is trained for 5 epochs, expect in the first PPO iteration, where the training is performed for 10 epochs. We apply the Lipschitz regularization on the policy parameters already during the PPO pre-training of the policy.

\end{document}